\newtheorem{cor}{Corollary}
\newtheorem{defn}{Definition}
\newtheorem{lemma}{Lemma}
\newtheorem{assumption}{Assumption}
\newtheorem{thm}{Theorem}
\newcommand\numberthis{\addtocounter{equation}{1}\tag{\theequation}}
\DeclarePairedDelimiter{\ceil}{\lceil}{\rceil}
\DeclarePairedDelimiter\abs{\lvert}{\rvert}%
\DeclarePairedDelimiter\norm{\lVert}{\rVert}%
\newcolumntype{L}[1]{>{\raggedright\let\newline\\\arraybackslash\hspace{0pt}}m{#1}}
\newcolumntype{C}[1]{>{\centering\let\newline\\\arraybackslash\hspace{0pt}}m{#1}}
\newcolumntype{R}[1]{>{\raggedleft\let\newline\\\arraybackslash\hspace{0pt}}m{#1}}
\newcommand*{\tabbox}[2][t]{%
    \vspace{0pt}\parbox[#1][2.5\baselineskip]{1cm}{\centering #2}}
\newcommand{\cellCenter}[1]{\multicolumn{1}{|c|}{#1} }
\newcommand{\eqdef}{\overset{\mathrm{def}}{=\joinrel=}}
\begin{document}

\title{Stabilized Sparse Online Learning for Sparse Data}

\author{
  Yuting Ma\\
  \texttt{yma@stat.columbia.edu}\\
  Department of Statistics\\
  Columbia University\\
  New York, NY 10027
  \and
 	Tian Zheng\\
  \texttt{tzheng@stat.columbia.edu} \\
  Department of Statistics\\
  Columbia University\\
  New York, NY 10027
}
\date{}

\maketitle

\begin{abstract}
Stochastic gradient descent (SGD) is commonly used for optimization in large-scale machine learning problems. \cite{langford2009sparse} introduce a sparse online learning method to induce sparsity via truncated gradient. With high-dimensional sparse data, however, this method suffers from slow convergence and high variance due to heterogeneity in feature sparsity. To mitigate this issue, we introduce a stabilized truncated stochastic gradient descent algorithm. We employ a soft-thresholding scheme on the weight vector where the imposed shrinkage is adaptive to the amount of information available in each feature. The variability in the resulted sparse weight vector is further controlled by stability selection integrated with the informative truncation. To facilitate better convergence, we adopt an annealing strategy on the truncation rate, which leads to a balanced trade-off between exploration and exploitation in learning a sparse weight vector. Numerical experiments show that our algorithm compares favorably with the original truncated gradient SGD in terms of prediction accuracy, achieving both better sparsity and stability. 
\end{abstract}


\noindent\textsc{Keywords}: {Sparse online learning, sparse data, truncated gradient, stability selection, adaptive shrinkage}

\section{Introduction}\label{sec_intro}

Modern datasets pose many challenges for existing learning algorithms due to their unprecedented large scales in both sample sizes and input dimensions. It demands both efficient processing of massive data and effective extraction of crucial information from an enormous pool of heterogeneous features. In response to these challenges, a promising approach is to exploit online learning methodologies that performs incremental learning over the training samples in a sequential manner. In an online learning algorithm, one sample instance is processed at a time to obtain a simple update, and the process is repeated via multiple passes over the entire training set. In comparison with batch learning algorithms in which all sample points are scrutinized at every single step, online learning algorithms have been shown to be more efficient and scalable for data of large size that cannot fit into the limited memory of a single computer. As a result, online learning algorithms have been widely adopted for solving large-scale machine learning tasks \citep{bottou1998online}.

In this paper, we focus on first-order subgradient-based online learning algorithms, which have been studied extensively in the literature for dense data.\footnote{{\em Dense data} is defined as a dataset in which the number of nonzero entries in all columns of its design matrix are in the order of $O(n)$ while the ones of {\em sparse data} are in the order of $O(\log(n))$ or less. }  Among these algorithms, popular methods include the Stochastic Gradient Descent (SGD) algorithm \citep{zhang2004solving, bottou2010large}, the mirror descent algorithm \citep{beck2003mirror} and the dual averaging algorithm \citep{nesterov2009primal}. Since these methods only require the computation of a (sub)gradient for each incoming sample, they can be scaled efficiently to high-dimensional inputs by taking advantage of the finiteness of the training sample. In particular, the stochastic gradient descent algorithm is the most commonly used algorithm in the literature of subgradient-based online learning. It enjoys an exceptionally low computational complexity while attaining steady convergence under mild conditions \citep{bottou1998online}, even for cases where the loss function is not everywhere differentiable.


Despite of their computational efficiency, online learning algorithms without further constraint on the parameter space suffers the ``curse of dimensionality'' to the same extent as their non-online counterparts. Embedded in a dense high-dimensional parameter space, not only does the resulted model lack interpretability, its variance is also inflated. As a solution, \textit{sparse online learning} was introduced to induce sparsity in the parameter space under the online learning framework \citep{langford2009sparse}. It aims at learning a linear classifier with a sparse weight vector, which has been an active topic in this area. For most efforts in the literature, sparsity is introduced by applying $L_1$ regularization on a loss function as in the classific LASSO method \citep{tibshirani1996lasso, shalev2011stochastic}. For example, \cite{duchi2009efficient} extend the framework of Forward-Backward splitting \citep{lions1979splitting} by alternating between an unconstrained truncation step on the sample gradient and an optimization step on the loss function with a penalty on the distance from the truncated weight vector.\cite{langford2009sparse} and \cite{carpenter2008lazy} both explore the idea of imposing a {\em soft-threshold} on the weight vector $\mathbf{w}  \in \mathbb{R}^p$ updated by the stochastic gradient descent algorithm:
$$w_j = \mbox{sign}(w_j)\max(|w_j| - \lambda, 0),\quad j=1, \dots, p. $$
This class of methods is known as the \textit{truncated gradient algorithm}. For every $K$ standard SGD updates, the weight vector is shrunk by a fixed amount to induce sparsity. In the work of \cite{duchi2010composite}, the same strategy has also been combined with a variant of the mirror descent algorithm \citep{beck2003mirror}.  \cite{wang2015framework} further extends the truncated gradient framework to adjust for cost-effectiveness. This simple yet efficient method of truncated gradients particularly motivates the algorithm proposed in this paper. Strategies different from the truncation-based algorithm have also been proposed. For example, \cite{xiao2009dual} proposed the Regularized Dual-Averaging (RDA) algorithm which builds upon the primal-dual subgradient method by \cite{nesterov2009primal}. The RDA algorithm learns a sparse weight vector by solving an optimization problem using the running average over all preceding gradients, instead of a single gradient at each iteration.

Closely related to sparse online learning is another area of active research, \textit{online feature selection}. Instead of enforcing just a shrinkage on the weight vectors via $L_1$ regularization, online feature selection algorithms explicitly invoke feature selection by imposing a hard $L_0$ constraint on the weight vector, \citep[e.g.,][]{wang2014online, wu2014large}. In other words, online feature selection algorithms focus on generating a resulted weight vector that has a high sparsity level by directly shrinking a large proportion of the weights directly to zero (also referred to as a {\em hard thresholding}). In practice, $L_0$ regularization is computationally expensive to solve due to its non-differentiability. The set of selected features also suffers from high variability as the decisions of hard-thresholding are based on single random samples in an online learning setting. Therefore, important features can be discarded simply owing to random perturbations. 

Most recent subgradient-based online learning algorithms do not consider potential structures or heterogeneity in the input features. As pointed out by \cite{duchi2011adaptive}, current methods largely follow a predetermined procedural scheme that is oblivious to the characteristics of data being used at each iteration. In large-scale applications, a common and important structure is heterogeneity in sparsity levels of the input features, i.e., the variability in the number of nonzero entries among features. For instance, consider the bag-of-word features in text mining applications.\footnote{Here, by {\em sparse features}, we refer to features for which most samples assume a constant value (e.g., 0) and a few samples take on other values. Without loss of generality, we assume the majority constant is 0 throughout this paper.}  For a learning task, the importance of a feature is not necessarily associated with the frequencies of its values. In genetics, for example, rare variants ($\le 1\%$ in the population) have been found to be associated with disease risks \citep{morris2010evaluation}. Both dense and sparse features may contain important information for the learning task. However, in the presence of heterogeneity in sparsity levels, using a simple $L_1$ regularization in an online setting will predispose rare features to be truncated more than necessary. The resulted sparse weight vectors usually exhibit high variance in terms of both weight values and the membership in the set of features with nonzero weights. As a result, the convergence of the standard truncation-based framework may also be hampered by this high variability. When the amount of information is scarce due to sparsity at each iteration, the convergence of the weight vector would understandably take a large number of iterations to approach the optimum. In two recent papers, \cite{oiwa2011frequency} and \cite{oiwa2012healing} tackle this problem via $L_1$ penalty weighted by the accumulated norm of subgradients for extending several basic frameworks in sparse online learning. Their results suggest that, by acknowledging the sparsity structure in the features, both prediction accuracy and sparsity are improved over the original algorithms while maintaining the same convergence rate. However, their resulted weight vectors are unstable as the imposed subgradient-based regularization are excessively noisy due to the randomness of incoming samples in online learning. The membership in the set of selected features with nonzero weights is also very sensitive to the orderings of the training samples.



In this paper, we propose a \textit{stabilized truncated stochastic gradient descent} algorithm for high-dimensional sparse data. The learning framework is motivated by that of the Truncated Gradient algorithm proposed by \cite{langford2009sparse}. To deal with the aforementioned issues with sparse online learning methods applied to high-dimensional sparse data, we introduce three innovative components to reduce variability in the learned weight vector and stabilize the selected features. First, when applying the soft-thresholding, instead of a uniform truncation on all features, we perform only \textit{informative truncations}, based on actual information from individual features during the preceding computation window of $K$ updates. By doing so, we reduce the heterogeneous truncation bias associated with feature sparsity. The key idea here is to ensure that each truncation for each feature is based on sufficient information, and the amount of shrinkage is adjusted for the information available on each feature. Second, beyond the soft-thresholding corresponding to the ordinary $L_1$ regularization, the resulted weight vector is \textit{stabilized} by staged purges of irrelevant features permanently from the active set of features. 
Here, {\em irrelevant features} are defined as features whose weights have been repeatedly truncated. Motivated by \textit{stability selection} introduced in \citet{meinshausen2010stability}, these permanent purges prevent irrelevant features from oscillating between the active and non-active set of features,  The ``\textit{purging}'' process also resembles hard-thresholding in online feature selection and results in a stabler sparse solution than other sparse online learning algorithms. Results on the theoretical regret bound (See Section~\ref{sec: properties}) show that this stabilization step helps improve over the original truncated gradient algorithm, especially when the target weight vector is notably sparse. To attune the proposed learning algorithm to the sparsity of the remaining active features, the third component of our algorithm is adjusting the amount of shrinkage progressively instead of fixing it at a predetermined value across all stages of the learning process. A novel hyperparameter, \textit{rejection rate}, is introduced to balance between {\em exploration} of different sparse combinations of features at the beginning and the {\em exploitation} of the selected features to construct accurate estimate at a later stage. Our method gradually anneal the rejection rate to acquire the necessary amount of shrinkage on the fly for achieving the desired balance.

The rest of paper is organized as follows. Section~\ref{sec_truncated_SGD} reviews the Truncated Gradient algorithm based on Stochastic Gradient Descent (SGD) framework for sparse learning proposed in \citet{langford2009sparse}. In Section~\ref{sec_stabilized_truncated_SGD}, we introduce, in details, the three novel components of our proposed algorithm. Theoretical analysis of the expected online regret bound is given in Section~\ref{sec: properties}, along with the computational complexity. Section~\ref{sec_practical_remarks} gives practical remarks for efficient implementation. In Section~\ref{sec_emp_result}, we evaluate the performance of the proposed algorithm on several real-world high-dimensional datasets with varying sparsity levels. We illustrate that the proposed method leads to improved stability and prediction performance for both sparse and dense data, with the most improvement observed in data with the highest average sparsity level. Section~\ref{sec_conclusion} concludes with further discussion on the proposed algorithm. 

\section{Truncated Stochastic Gradient Descent for Sparse Learning} \label{sec_truncated_SGD}

Assume that we have a set of training data $\mathcal{D} = \{ z_i = (\mathbf{x}_i , y_i), i=1, \dots, n\}$, where the feature vector $\mathbf{x}_i \in \mathbb{R}^p$ and the scalar output $y_i \in \mathbb{R}$. In the following, we use $\mathbf{x}_i$ to represent the vector of the $i^{th}$ sample of length $p$ and $\mathbf{x}_{\cdot, j}$ for the $j^{th}$ feature vector of all samples of length $n$. In this paper, we are interested in the case that both $p$ and $n$ are large and the feature vectors $\mathbf{x}_{\cdot, j}$'s, $j=1, \dots, p$, are sparse. We consider a loss function $l(\hat{y}, y)$ that measures the cost of predicting $\hat{y}$ when the truth is $y$. The prediction $\hat{y}$ is given by function $f_{\mathbf{w}}(\mathbf{x})$ from a family $\mathcal{F}$ parametrized by a weight vector $\mathbf{w}$. Denote $L(\mathbf{w}, \mathbf{z}) \eqdef  l(f_{\mathbf{w}}(\mathbf{x}), y)$. The learning goal is to obtain an optimal weight vector $\hat{\mathbf{w}}$ that minimize the loss function $ \sum\limits_{i=1}^{n} L(\mathbf{w}, z_i)$ over the training data, with sparsity in the weight vector induced by a regularization term $\Psi(\mathbf{w})$. We can then formulate the learning task as a regularized minimization problem: 
\begin{align} \label{eq_obj_1}
\hat{\mathbf{w}} & = \underset{\mathbf{w} \in \mathbb{R}^p}{\operatorname{arg \min}} \ \sum\limits_{i=1}^{n} L(\mathbf{w}, z_i) + \Psi(\mathbf{w}).
\end{align}
The above optimization problem is often solved using some version of {\em gradient descent}. When both $p$ and $n$ are large, the computation becomes very demanding. To address this computational complexity, the Stochastic Gradient Descent (SGD) algorithm was proposed as a stochastic approximation of the full gradient algorithm \cite{bottou1998online}. Instead of computing the gradient over the entire training set as under the batch setting, the stochastic gradient descent algorithm uses approximate gradients based on subsets of the training data. This is particularly attractive to large scale problems as it leads to a substantial reduction in computing complexity and potentially distributed implementation. 

For applications with large data sets or streaming data feeds, SGD has also been used as a subgradient-based {\em online learning} method. Online learning and stochastic optimization are closely related and interchangeable most of the time \citep{cesa2004generalization}. For simplicity,  in the following, we focus our discussion and algorithmic description under the online learning framework with regret bound models. Nonetheless, our results can be readily generalized to stochastic optimization as well.

In online learning, the algorithm receives a training sample $z_t= (\mathbf{x}_t, y_t)$ at a time from a continuous feed. Without sparsity regularization, at time $t$, the weight vector is updated in an online fashion with a single training sample $z_t \in \mathcal{D}$ drawn randomly, 

\begin{align} \label{eq_ord_sgd_update}
\mathbf{w}_{t} &= \mathbf{w}_{t-1} - \eta   L'(\mathbf{w}_{t-1}, z_t)
\end{align}

where $\eta > 0$ is the learning rate and $L'(\mathbf{w}_t, z_t) \in \partial_{\mathbf{w}_t}L(\mathbf{w}_t, z_t)$ is a subgradient of the loss function $L(\mathbf{w}_t, z_t)$ with respect to $\mathbf{w}_t$. The set of subgradients of f at the point $x$ is called the subdifferential of $f$ at $x$, and is denoted $\partial f(x)$. A function $f$ is called subdifferentiable if it is subdifferentiable at all $x \in$ \textbf{dom} $f$. When $L(\mathbf{w}, \cdot)$ is differentiable at $\mathbf{w}$, $\partial_{\mathbf{w}}L(\mathbf{w}, \cdot) = \{\nabla_{\mathbf{w}} L(\mathbf{w}, \cdot) \}$. At the same time, a sequence of decisions $\mathbf{w}_t$ is generated at $t=1, 2, \dots$, that encounters a loss $L(\mathbf{w}_t, z_t)$ respectively. 

The goal of online learning algorithm with sparsity regularization is to achieve \textit{low} regret with respect to a fixed optimal weight vector $\mathbf{w}^* \in \mathcal{W} \subset \mathbb{R}^p$.  Here, $\mathcal{W}$ is the parameter space for sparse weights vectors (see Assumption~\ref{assump_3}  on page 15 for more details.) The regret is defined as:
\begin{equation} \label{eq_regret}
R_T(\mathbf{w}^*)  \triangleq \sum\limits_{t=1}^{T} \left( L(\mathbf{w}_t, z_t) + \Psi(\mathbf{w}_t) \right) - \sum\limits_{t=1}^{T} \left( L(\mathbf{w}^*, z_t) + \Psi(\mathbf{w}^*) \right).
\end{equation}

In this paper, we focus on the $L_1$ regularization where $\Psi(\mathbf{w}) = g||\mathbf{w}||_1$ and $g$ is the regularizing parameter. When adopted in an online learning framework, standard SGD algorithm does not work well in addressing \eqref{eq_obj_1} with $L_1$ penalty. Firstly, a simple online update requires the projection of the weight vector $\mathbf{w}$ onto a $L_1$-ball at each step, which is computationally expensive with a large number of features. Secondly, with noisy approximate subgradient computed using a single sample, the weights can easily deviate from zero due to the random fluctuations in $z_t$'s. Such a scheme is therefore inefficient to maintain a sufficiently sparse weight vector. 

To address this issue, \cite{langford2009sparse} induced sparsity in $\mathbf{w}$ by subjecting the stochastic gradient descent algorithm to soft-thresholding. For every $K \in \mathbb{N}^+$ iterations at step $t$, each of which is as defined in \eqref{eq_ord_sgd_update}, the weight vector is shrunk by a soft-threshold operator $T$ with a \textit{gravity parameter} $\mathbf{g} \in \mathbb{R}^p$ with $g_j \geq 0$ for $j=1, \dots, p$. For a vector $\mathbf{w} = [w_1, \dots, w_p] \in \mathbb{R}^p$,
\begin{equation}\label{eq_weight_trunc}
\hat{\mathbf{w}}_{t} = T(\mathbf{w}_t, \mathbf{g}),
\end{equation}
where $T(\mathbf{w}, \mathbf{g}) = \left[ T(w_1, g_1), \dots, T(w_p, g_p) \right]$ with the operator $T$ defined by 
\begin{align*} 
T(w_j, g_j) &  \triangleq \left\{ \begin{array}{ll} 
\max(w_j - g_j, 0), & \mbox{if } w_j > 0; \\
\min(w_j + g_j, 0), & \mbox{if } w_j \leq 0. \\
\end{array} \right. \numberthis \\  \label{eq_update_trunc}
\end{align*}

As one can see, the sequence of $K$ SGD updates can be treated as a unit computational block, which will be referred to as a \textit{burst} hereafter. Here the word \textit{burst} indicates that it is a sequence of repetitive actions, e.g., the standard SGD updates as defined in \eqref{eq_ord_sgd_update}, without interruption. Each burst is followed by a {\em soft-thresholding truncation} defined in \eqref{eq_weight_trunc}, which puts a shrinkage on the learned weight vector. 

A burst can be viewed as a base feature selection realized on a set of random samples with $L_1$ regularization as in the classical LASSO \citep{tibshirani1996lasso}. Within a burst, let $\mathcal{X}_K$ be the set of $K$ random samples on which the weight vector $\hat{\mathbf{w}}$ is stochastically learned. We define the set of features with nonzero weights in $\hat{\mathbf{w}}$ as its \textit{active (feature) set}:
\begin{equation} \label{eq_selected_set}
\hat{\mathcal{S}}^g(\hat{\mathbf{w}}; \mathcal{X}_K) = \{ j: | \hat{w}_j| > 0 \},
\end{equation}
with a corresponding gravity $\mathbf{g}$. The steps within a truncated burst are summarized in Algorithm~\ref{alg:ssgd_1}.


In the truncated gradient algorithm of \cite{langford2009sparse}, the gravity parameter is a constant across all dimensions as $\mathbf{g} = g_0 K \mathbf{1}_p$, where $g_0 \in \mathbb{R} \geq 0$ is a \textit{base gravity} for each update in a burst and $\mathbf{1}_p \triangleq (1, \dots, 1) \in \mathbb{R}^p$. In general, with greater parameter $g_0$ and smaller burst size $K$, more sparsity is attained. When $g_0 = 0$, the update in \eqref{eq_weight_trunc} becomes identical to the standard stochastic gradient descent update in \eqref{eq_ord_sgd_update}. \cite{langford2009sparse} showed that this updating process can be regarded as an online counterpart of $L_1$ regularization in the sense that it approximately solves \eqref{eq_obj_1} in the limit  as $K \rightarrow \infty$ and $\eta \rightarrow 0$.

\begin{algorithm}[tb]
   \caption{$B_0(\mathbf{w}_0, g_0)$: A burst of $K$ updates with truncation using a universal gravity as in truncated SGD by \cite{langford2009sparse}.}
   \label{alg:ssgd_1}
\begin{algorithmic}
   \STATE {\bfseries Input:} $\mathbf{w}_0$ at initialization and the base gravity $g_0$.
   \STATE {\bfseries Parameters:} $K$, $\eta$.
   \FOR{$t=1$ {\bfseries to} $K$}
   \STATE Draw $z_t \in \mathcal{D}$ uniformly at random. 
   \STATE  $\mathbf{w}_t = \mathbf{w}_{t-1} - \eta L' \left( \mathbf{w}_{t-1}, z_t \right)$, where $L' \left( \mathbf{w}_{t-1}, z_t \right) \in \partial_{\mathbf{w}_{t-1}} L \left( \mathbf{w}_{t-1}, z_t \right)$.
   \ENDFOR
   \STATE $\hat{\mathbf{w}} = T(\mathbf{w}_K, g_0 K \mathbf{1}_p)$.
   \STATE {\bfseries Return:} $\hat{\mathbf{w}}$.
\end{algorithmic}
\end{algorithm}

\section{Stabilized Truncated SGD for Sparse Learning} \label{sec_stabilized_truncated_SGD}
Truncated SGD \cite{langford2009sparse} works well for dense data. When it comes to high-dimensional {\em sparse} inputs, however, it suffers from a number of issues. \cite{shalev2011stochastic} observe that the truncated gradient algorithm is incapable of maintaining sparsity of the weight vector as it iterates. Recall that, under the online learning setting, the weight vector $\mathbf{w}$ is updated with a noisy approximation of the true expected gradient using one sample at a time, from a random ordering of the data. With sparse inputs, it is highly probable that an important feature does not have a nonzero entry for many consequent samples, and is meaningfully updated for only a few times out of the $K$ updates in a burst.  As a result, it would be truncated after a few iterations and brought back to nonzero after another few updates. At the same time, sparsity in inputs will also give rise to sporadic large nonzero updates for irrelevant features, which cannot be fully resolved by the soft-threshold operator. The derived weight vector $\mathbf{w}$'s are of high variance, inadequate sparsity and poor generalizability.  As an example, the number of nonzero variables in the weight vector during the last 1000 stochastic updates from the truncated gradient algorithm implemented on a high-dimensional sparse dataset (Dexter text mining data set; see Section~\ref{sec_emp_result} for details.) are shown in Figure~\ref{fig_trunc_demo}. It can be seen that the numbers of nonzero features in the weight vectors learned by the truncated SGD algorithm ($K=5$) remain large and highly unstable throughout these 1000 iterations, oscillating within 10\% of the total number of features. As a comparison, also in Figure~\ref{fig_trunc_demo}, we plot the results from our proposed stabilized truncated SGD applied to the same data. During these last 1000 updates, the proposed algorithm is using a less frequent truncation schedule due to our {\em annealed reject rate}. It attains both high sparsity in the weight vector and high stability with high-dimensional sparse data. 


\begin{figure}[th] \centering
\includegraphics[width=12cm]{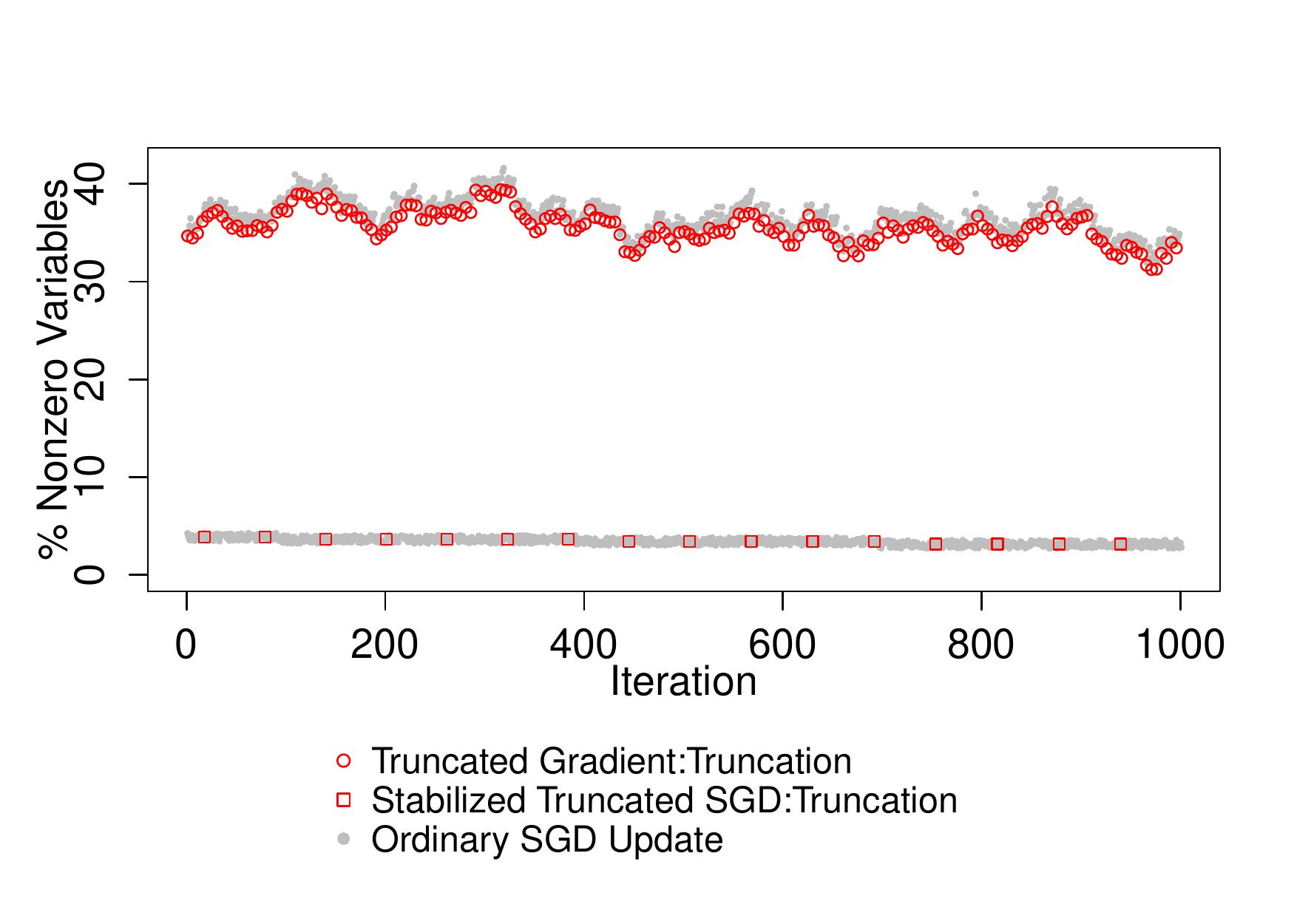}
\caption{An example of the truncated SGD algorithm \citep{langford2009sparse} and the proposed stabilized truncated SGD algorithm applied to a high-dimensional sparse data set. Here we compare the percentage of nonzero variables in the resulted weight vector at each iteration during the last 1000 iterations of both algorithms. The underlying data set is the text mining dataset, {\em Dexter},  with $10,000$ features and 0.48\% of sparsity, which is described in details in Section~\ref{sec_emp_result}. \label{fig_trunc_demo}
} 
\end{figure}

In this section, we introduce the \textit{stabilized truncated Stochastic Gradient Descent (SGD)} algorithm. It attains a truly sparse weight vector that is stable and gives generalizable performance. Our proposed method attunes to the sparsity of each feature and adopts \textit{informative truncation}. The algorithm keeps track of whether individual features have had enough information to be confidently subject to soft-thresholding. Based on the truncation results, we systematically reduce the active feature set by permanently discarding features that are truncated to zero with high probability via \textit{stability selection}. We further improve the efficiency of our algorithm by adapting gravity to the sparsity of the current active feature set as the algorithm proceeds.

\subsection{Informative Truncation} \label{sec_informative_truncation}

For the truncated SGD algorithm, \cite{langford2009sparse} suggest a general guideline for determining gravity in the batch mode by scaling a base gravity $g_0$ by $K$, the number of updates, for a single truncation after a burst. A direct online adaptation of a $L_1$ regularization would shrinks the weight vector at every iteration. The above batch mode operation is to delay the shrinkage for $K$ iterations so that the truncation is executed based on information collected from $K$ random samples instead of from a single instance. This guideline implicitly assumes that the $K$ SGD updates in a burst are equally informative, which is in general true for dense features. For sparse features, however, under the online learning setting, not every update is informative about every feature due to the scarcity of nonzero entries. The original uniform formula, $\mathbf{g}=K g_0 \mathbf{1}_p$,  for gravity would then create an undesirable differential treatment for features with different levels of sparsity. With a relatively small $K$, it is very likely that a substantial proportion of features would have no non-zero values on a size-$K$ subsample used in a particular burst. The weights for these features remain unchanged after $K$ updates. Consequently, the set of sparse features run the risk of being truncated to zero based on very few informative updates. The truncation decision is therefore mostly determined by a feature's sparsity level, rather than its relevance to the class boundary.


To make the learning be informed of the heterogeneity in sparsity level among features, we introduce the \textit{informative truncation} step, extended from the idea of base gradient used in Algorithm~\ref{alg:ssgd_1}. Instead of applying a universal gravity proportional to $K$ to all features, the amount of shrinkage is set proportional to the number of times that a feature is actually updated with nonzero values in the size-$K$ subsample, i.e., the number of \textit{informative updates}. Specifically, within each burst, the algorithm keeps a vector of \textit{counters}, $\tilde{\mathbf{k}} \in \mathbb{R}^p$, of the numbers of informative updates for the features $\mathbf{x}_{\cdot, j}$, $j=1, \dots, p$. Let $g_0 \in \mathbb{R} \geq 0$ be the base gravity parameter that serves as the unit amount of shrinkage for each informative update on each feature. At the end of each burst, we shrink feature $\mathbf{x}_{\cdot, j}$ by $g_0 \tilde{k}_j$. In other words, here we set $\mathbf{g} = g_0 \tilde{\mathbf{k}}$. The computational steps for a burst with informative truncation in summarized in Algorithm~\ref{alg:ssgd_2}.

\begin{algorithm}[th]
   \caption{$B_1(w_0, g_0)$: A burst of $K$ updates with informative truncation.}
   \label{alg:ssgd_2}
\begin{algorithmic}
   \STATE {\bfseries Input:} $\mathbf{w}_0$ at initialization and the base gravity $g_0$.
   \STATE {\bfseries Initialization:} $\tilde{\mathbf{k}} = \mathbf{0}_{p} \in \mathbb{R}^p$.
   \STATE {\bfseries Parameters:} $K$, $\eta$.
   \FOR{$t=1$ {\bfseries to} $K$}
   \STATE Draw $z_t = (\mathbf{x}_t, y_t) \in \mathcal{D}$ uniformly at random. 
   \STATE $\mathbf{w}_t = \mathbf{w}_{t-1} - \eta  L' \left( \mathbf{w}_{t-1}, z_t \right)$, where $L' \left( \mathbf{w}_{t-1}, z_t \right) \in \partial_{\mathbf{w}_{t-1}} L \left( \mathbf{w}_{t-1}, z_t \right)$.
   \STATE $\tilde{\mathbf{k}} \leftarrow \tilde{\mathbf{k}} + \mathbbm{1}(|\mathbf{x}_t| > 0)$.
   \ENDFOR
   \STATE $\hat{\mathbf{w}} = T(\mathbf{w}_K, g_0 \tilde{\mathbf{k}})$.
   \STATE {\bfseries Return:} $\hat{\mathbf{w}}$, $\tilde{\mathbf{k}}$.
\end{algorithmic}
\end{algorithm}

The proposed informative truncation scheme ensures that the decision of truncation is made based on sufficient and equivalent amount of evidence for evaluating each feature. A theoretical justification of how informative truncation helps improving truncation bias can be found in Lemma~\ref{lemma2} (Section~\ref{sec: properties}). This feature-specific gravity attunes to the sparsity structure incurred at each burst without \textit{ad-hoc} adjustment. It also avoids data pre-processing for locating sparse entries, which can be computationally expensive and compromises the advantage of online computation. In comparison to the truncated gradient algorithm in \citet{langford2009sparse} that quickly shrinks many features to zero indiscriminately, informative truncation keeps sparse features until enough evaluation is conducted. In doing so, sparse yet important features will be retained. The proposed approach also reduce the variability in the resulted sparse weight vector during the training process. \cite{duchi2011adaptive} uses a similar strategy that allows the learning algorithm to adaptively adjust its learning rates for different features based on cumulative update history. They use the $L_2$ norm of accumulated gradients to regulate the learning rate. By adapting the gravity with the counter $\tilde{\mathbf{k}}$ within each burst, our proposed strategy here can be viewed as applying the $L_0$ norm to the accumulated gradients that is refreshed every $K$ steps.

\subsection{Stability Selection}

Despite of its scalability, subgradient-based online learning algorithms commonly suffer from instability. It has been shown both theoretically and empirically that stochastic gradient descent algorithms are sensitive to random perturbations in training data as well as specifications of learning rate \citep{toulis2015stability, hardt2015train}. This instability is particularly pronounced in sparse online learning with sparse data, as discussed in Section~\ref{sec_intro}. Under an online learning setting, using random ordering of the training sample as inputs, the algorithm would produce distinct weight vectors and unstable memberships of the final active feature set. Moreover, there has been a lot of discussion, in the literature,  on the link between the instability of an learning algorithm and  its deteriorated generalizability \citep{bousquet2002stability, kutin2002almost, rakhlin2005stability, shalev2010learnability}. 

To tackle this instability issue, in the proposed algorithm, we exploit the method of {\em stability selection} to improve its robustness to random perturbation in the training data. Stability selection \citep{meinshausen2010stability} does not launch a new feature selection method. Rather, its aim is to enhance and improve a sparse learning method via subsampling. The key idea of stability selection is similar to the generic bootstrap \citep{meinshausen2010stability}. It feeds the base feature selection procedure with multiple random subsamples to derive an empirical selection probability. Based on aggregated results from subsamples, a subset of features is selected with low variability across different subsamples. With proven consistency in variable selection, stability selection helps remove noisy irrelevant features and thus reduce the variability in learning a sparse weight vector.

Incorporating stability selection into our proposed framework, each truncated burst with gravity parameter $\mathbf{g}$ is treated as an individual sparse learning engine. It takes $K$ random samples and carries out a feature selection to obtain a sparse weight vector.  In the following, we define first the notion of {\em selection probability} for the stability selection step in our proposed algorithm. 

\begin{defn}[selection probability]
Let $\mathcal{X}_K$ be a random subsample of $\{1, \dots, n\}$ of size $K$, drawn without placement. Parametrized by the gravity parameter $\mathbf{g}$, the probability of the feature $\mathbf{x}_{\cdot, j}$ being in the active set of a truncated burst that returns $\hat{\mathbf{w}}$ is
$$
\Pi^{\mathbf{g}}_j = P^* \left( j \in \hat{S}^{\mathbf{g}} (\hat{\mathbf{w}}; \mathcal{X}_K) \right)= \mathbb{E}_{\mathcal{D}} \left[ \mathbbm{1} (|\hat{w}_j| > 0) \right],
$$
where the probability $P^*$ is with respect to the random subsampling of $\mathcal{X}_K$.  Let $\Pi^{\mathbf{g}} = [\Pi^{\mathbf{g}}_1, \dots, \Pi^{\mathbf{g}}_p]$.
\end{defn}

For simplicity, we drop the superscript $\mathbf{g}$ of $\Pi^{\mathbf{g}}$ in later discussions. For the rest of the paper, the selection probability $\Pi$ always refers to $\Pi^{\mathbf{g}}$ that corresponds to weight vector $\hat{\mathbf{w}}$ with gravity parameter $\mathbf{g}$.

Under unknown data distribution, the selection probabilities cannot be computed explicitly. Instead, they are estimated empirically. Since each truncation burst performs a screening on all features, the frequency of each feature being selected by a sequence of bursts can be used to derive an estimator of the selection probability. We denote a sequence of $n_K > 0$ truncated bursts as a \textit{stage}. A preliminary empirical estimate of the selection probability is given by
\begin{equation} \label{eq_sel_prob_1}
\hat{\Pi}_j = \left\{ \begin{array}{ll}
\frac{\sum_{\tau: \tilde{k}_{j,\tau} > 0}  \mathbbm{1}(|\hat{\mathbf{w}}_{j,\tau} | > 0)}{\sum\limits_{\tau=1}^{n_k} \mathbbm{1}( \tilde{k}_{j, \tau} > 0) }, & \mbox{for $j$ s.t.}  \sum\limits_{\tau=1}^{n_K} \tilde{k}_{j,\tau} > 0 \\
1, &  \mbox{otherwise}
\end{array}
\right.,
\end{equation}
where $\tilde{\mathbf{k}}_{\tau}$ are the counters of informative updates for burst $\tau$, $\tau = 1, \dots, n_K$.

Different from the conventional stability selection setting, $\hat{\mathbf{w}}_{\tau}$'s are obtained sequentially and thus are dependent with each other. When $n_K$ is small, different subsamples produce selection probability estimates using \eqref{eq_sel_prob_1} exhibit high variability, even when initialized with the same weight vector at $\tau=1$. On the other hand, a large value of $n_K$ requires a prohibitively large number of iterations for convergence. To resolve the issues of estimating selection probability using a single sequence of SGD updates, we introduce a multi-thread framework of updating paths. Multiple threads of sequential SGD updates are executed in a distributed fashion, which readily utilizes modern multi-core computer architecture. With $M$ processors, we initialize the algorithm on each path of SGD updates with a random permutation of the training data, $\mathcal{D}$, denoted as $\mathcal{D}^{(1)}, \dots, \mathcal{D}^{(m)}$. Then independently, $M$ stages of bursts run in parallel along $M$ paths, which return with $\hat{\mathbf{w}}_{\tau}^{(m)}$, $\tau = 1, \dots, n_K$, $m=1, \dots, M$. The joint estimate of selection probability with gravity $\mathbf{g}$ is obtained as
\begin{equation} \label{eq_sel_prob_2}
\hat{\Pi}_j = \left\{ \begin{array}{ll}
\frac{ \sum\limits_{m=1}^{M}  \sum_{\tau: \tilde{k}^{(m)}_{j,\tau} > 0}  \mathbbm{1}(|\hat{\mathbf{w}}^{(m)}_{j,\tau} | > 0)}{ \sum\limits_{m=1}^{M}  \sum\limits_{\tau=1}^{n_k} \mathbbm{1}( \tilde{k}^{(m)}_{j, \tau} > 0) }, & \mbox{for $j$ s.t.}  \sum\limits_{m=1}^{M} \sum\limits_{\tau=1}^{n_K} \tilde{k}_{j,\tau}^{(m)} > 0 \\
1, & \mbox{otherwise}
\end{array}
\right..
\end{equation}

When more processors are available, a smaller $n_K$ is required for the algorithm to obtain a stable estimate of selection probability. The dependence among $\hat{\mathbf{w}}_{\tau}$'s is also attenuated when $M$ random subsets of samples are used for the estimation. This strategy falls under parallelized stochastic gradient descent methods, which is discussed in detail by \cite{zinkevich2010parallelized}.

Under the framework of stability selection, each stage on every path uses a random subsample. The estimated selection probability quantifies the chance that a feature is found to have high relevance to class differences given a random subsample. At the end of each stage, \textit{stable features} are identified as those that belong to a large fraction of active sets incurred during this stage of bursts.
\begin{defn}[Stable Features]
For a purging threshold $\pi_0 \in [0,1]$, the set of stable features with gravity parameter $\mathbf{g}$ is defined as 
\begin{align}\label{eq_stable_set}
\hat{\Omega}^{\mathbf{g}} = \{j: \Pi_j^{\mathbf{g}} \geq \pi_0 \}.
\end{align}
\end{defn}
For simplicity, we write the stable set  $\hat{\Omega}^{\mathbf{g}}$ as $\hat{\Omega}$ when there is no ambiguity.

Stability selection retains features that have high selection probabilities and discard those with low selection probabilities. At the end of a stage of $M$ paths, we \textit{purge} the features that are not in the set of stable features by permanently setting their corresponding weights to zero, and remove them from subsequent updates. We define the \textit{stabilized} weight vector as 
\begin{equation} \label{eq_stability_selection}
\tilde{\mathbf{w}} = \hat{\mathbf{w}} \cdot \mathbbm{1}_{\hat{\Omega}}.
\end{equation}

As discussed above, due to the nature of online learning with sparse data, there are two undesirable learning setbacks in a single truncated burst. The first occurs when an important feature has its weight stuck at zero due to inadequate information in the subsample used, while the second case is when a noise feature's weight gets sporadic large updates by chance. Using informative bursts, we can avert the first type of setbacks and using selection probability based on multiple bursts, we can spot noisy features more easily. In the presence of a large number of noisy features, the learned weights for important features suffer from high variance. Via stability selection, we systematically remove noisy features permanently from the feature pool. Furthermore, the choice of a proper regularization parameter is crucial yet known to be difficult for sparse learning, especially due to the unknown noise level. Applying stability selection renders the algorithm less sensitive to choice of the base gravity parameter $g_0$ in learning a sparse weight vector via truncated gradient.  As we will show using results from our numerical experiments, this purging by stability selection leads to a notable reduction in the estimation variance of the weight vector. Here, $\pi_0$ is a tuning parameter in practice. We have found that the learning results in the numerical experiments are not sensitive to different values of $\pi_0$ within a reasonable range. Under mild assumptions discussed in Section~\ref{sec: properties}, we derive a lower bound of the expected improvement in convergence by employing stability selection in the learning process in Lemma~\ref{lemma1}.


\begin{algorithm}[tb]
   \caption{$B_2(\mathbf{w}_0, g_0, \hat{\Omega}, \mathcal{D}^{(m)})$: Informative truncated burst with stability selection in thread $m$.}
   \label{alg:ssgd_3}
\begin{algorithmic}
   \STATE {\bfseries Input:} $\mathbf{w}_0$, $g_0$, the input data $\mathcal{D}^{(m)}$ and the current set of stable features $\hat{\Omega}$, which is the output of equation \eqref{eq_stable_set} using \eqref{eq_sel_prob_2} with predetermined threshold $\pi_0$.
   \STATE {\bfseries Parameters:} $K$, $\eta$.
	\STATE Initialize $\mathbf{v}_0 = (w_{0, j})_{j \in  \hat{\Omega}}$ , $\tilde{\mathbf{k}} = \mathbf{0}_{|\hat{\Omega}|}$.
   \FOR{$t=1$ {\bfseries to} $K$}
   \STATE Draw $z_t=(\mathbf{x}_t, y_t)$ sequentially from $\mathcal{D}^{(m)}$. 
   \STATE $\tilde{z}_t = (z_{t,j})_{j \in  \hat{\Omega} }$.
   \STATE $\mathbf{v}_t = \mathbf{v}_{t-1} - \eta L' \left( \mathbf{v}_{t-1}, \tilde{z}_t \right)$, , where $L' \left( \mathbf{v}_{t-1}, z_t \right) \in \partial_{\mathbf{v}_{t-1}} L \left( \mathbf{v}_{t-1}, z_t \right)$.
   \STATE $\tilde{\mathbf{k}} \leftarrow \tilde{\mathbf{k}} + \mathbbm{1}(|\mathbf{x}_t| > 0)$.
   \ENDFOR
   \STATE $\hat{\mathbf{u}} = T(\mathbf{v}_K, g_0 \tilde{\mathbf{k}})$.
   \STATE $\hat{w}_j = \left\{ \begin{array}{ll} 
   \hat{u}_{j'}, & \mbox{if } j \in  \hat{\Omega} \mbox{ and } \hat{\Omega}_{j'} = j \\
   0, &  \mbox{if } j \notin \hat{\Omega} 
   \end{array}    \right.$.
   \STATE {\bfseries Return:} $\hat{\mathbf{w}}$, $\tilde{\mathbf{k}} $.
\end{algorithmic}
\end{algorithm}

\subsection{Adaptive Gravity with Annealed Rejection Rate}

The truncated SGD \citep{langford2009sparse} adopts a universal and fixed base gravity parameter at all truncations. As pointed out in  \cite{langford2009sparse} , a large value of the base gravity $g_0$ achieves more sparsity but the accuracy is compromised, while a small value of $g_0$ leads to less sparse weight vector yet attaining better performance. In other words, different extents of shrinkage serve different purposes of a learning algorithm. The needs for shrinkage also changes as the weight vector and the stable set evolves. Intuitively, the truncation is expected to be greedy at the beginning so that the number of nonzero feature can be quickly reduced for better computational efficiency and learning performance. As the algorithm proceeds, fewer features remain in the stable set. We should then be careful not to shrink important features with a truncation that is too harsh. 

A large base gravity $g_0$ is effective in inducing sparsity at the beginning of the algorithm when the weight vector $\hat{\mathbf{w}}$ is dense. As the algorithm proceeds, the same value of gravity is likely to impose too much shrinkage when the learned weight vector $\hat{\mathbf{w}}$ becomes very sparse, exposing some truly important features at the risk of being purged. On the other hand, a small fixed gravity is over-conservative so that the algorithm will not shrink irrelevant features effectively, leading to slow convergence and a dense weight vector overridden by noise. Tuning a reasonable fixed base gravity parameter for a particular data set does not only creates additional computational burden, but also inadequate in addressing different learning needs during different stages of the algorithm. 

As the role of {\em gravity} in a learning algorithm is to induce sparse estimates, in this paper, we propose an \textit{adaptive gravity} scheme that delivers the right amount of shrinkage at each stage of the algorithm towards a desirable level of sparsity for the learned weight vector. We propose to control sparsity by a target {\em rejection rate} $\beta$, that is, the proportion of updates that are expected to be truncated. Guided by this target rejection rate, we derive the necessary shrinkage amount and the corresponding gravity. As we discussed in Section~\ref{sec_informative_truncation}, a base gravity $g_0$ is used in our learning algorithms to create gravity values for individual features that are attuned to their data sparsity levels. Therefore our adaptive gravity scheme is carried out by adjusting $g_0$. At the beginning of a particular stage, we examine the truncations carried out during the previous stage. The base gravity $g_0$ is then adjusted to project the target rejection rate during the current stage. Specifically, at stage $s$, we look at the pooled set of non-truncated weight vectors and informative truncation counters $\left\{ \mathbf{w}_{\tau}^{(m)},  \tilde{\mathbf{k}}_{\tau}^{(m)}, \tau=1, \dots, n_K, m=1, \dots, M \right\}$ from all the bursts conducted in the previous stages on multiple threads. The adaptive  base gravity $g_0$ for a target rejection rate $\beta_s \in [0,1]$ is then obtained as 
\begin{equation} \label{eq_adaptive_g}
g_{0, s}(\beta_s) \triangleq \sup \{ g_0 \geq 0: \hat{p}_s(g_0) \leq \beta_s \}.
\end{equation}
Here $\hat{p}_s(g_0)$ is the empirical probability, i.e.,
\begin{align*}
\hat{p}_s(g_0) \triangleq \frac{\sum\limits_{m=1}^{M} \sum\limits_{\tau=1}^{n_k} \sum_{ \{j: j \in \hat{\Omega}_s, \tilde{k}_{j,\tau}^{(m)} > 0 \} } \mathbbm{1} \left( \left\vert \frac{ \Delta w_{j,\tau}^{(m)}}{\tilde{k}_{j,\tau}^{(m)}} \right\vert > g_0 \right) }{\sum\limits_{m=1}^{M} \sum\limits_{\tau=1}^{n_k} \sum_{j \in \hat{\Omega}_s} \mathbbm{1} \left( \tilde{k}_{j,\tau}^{(m)} > 0 \right)},
\end{align*}
where $\Delta w_{j,\tau}^{(m)} \triangleq w_{j,\tau}^{(m)}  - w_{j,\tau-1}^{(m)} $ is the amount of updates on feature $\mathbf{x}_{\cdot, j}$ during the $\tau^{th}$ burst. 

%


We initialize the algorithm with a high rejection rate so that a large proportion of the weight vector can be reduced to zero at the end of each burst during the early stage of the algorithm. It allows the algorithm to explore as many sparse combination of features as possible at the early stage of the learning process. Along with the stability selection, the set of stable features can be quickly reduced to a manageable size by removing the majority of noises. When the weight vector becomes sparse, we decrease the rejection rate proportionally. With a lower rejection rate, and consequently a lower gravity, the algorithm can better exploit the subsequent standard SGD updates for a more accurate estimate of the true weight vector. As the rejection rate decreases to 0, the algorithm converges to the standard stochastic gradient descent algorithm on a small subset of stable features.

To achieve the balance between exploration and exploitation, we construct an annealing function for the rejection rate that decreases monotonically as the level of sparsity decreases. Let $\beta_0 \in [0,1]$ be the \textit{maximum rejection rate} at initialization and let $\gamma$ be the \textit{annealing rate}. The annealing function $\phi$ for the rejection rate at stage $s+1$ is given by
\begin{align*} 
\beta_{s+1} & = \phi(d_s; \beta_0, \gamma)  \\
& = \left\{ \begin{array}{ll}
\beta_0 \left[ \exp \left( - \gamma d_s \right) - d_s e^{-\gamma} \right]   & \gamma \geq 0 \\ \numberthis \label{eq_rej_rate}
\beta_0 \frac{\log(1 -\gamma (1- d_s))}{\log(1-\gamma)}, & \gamma < 0 
\end{array}
\right.,
\end{align*}
where $d_s = \frac{| \hat{\Omega}_{s} |}{p}$ is the level of weight vector sparsity at the end of stage $s$. The greater the value $\gamma$ is, the faster the rejection rate is annealed to zero as the number of stable features decreases. 

A positive, zero and negative value of $\gamma$ corresponds to exponential decay, linear decay and logarithmic decay of the rejection rate, respectively. Figure~\ref{fig_demo_anneal_beta} presents examples of the rejection rate anneal function with $\gamma = -5, 0$, and $5$ respectively.

\begin{figure}[h] 
\centering
\includegraphics[width=8cm]{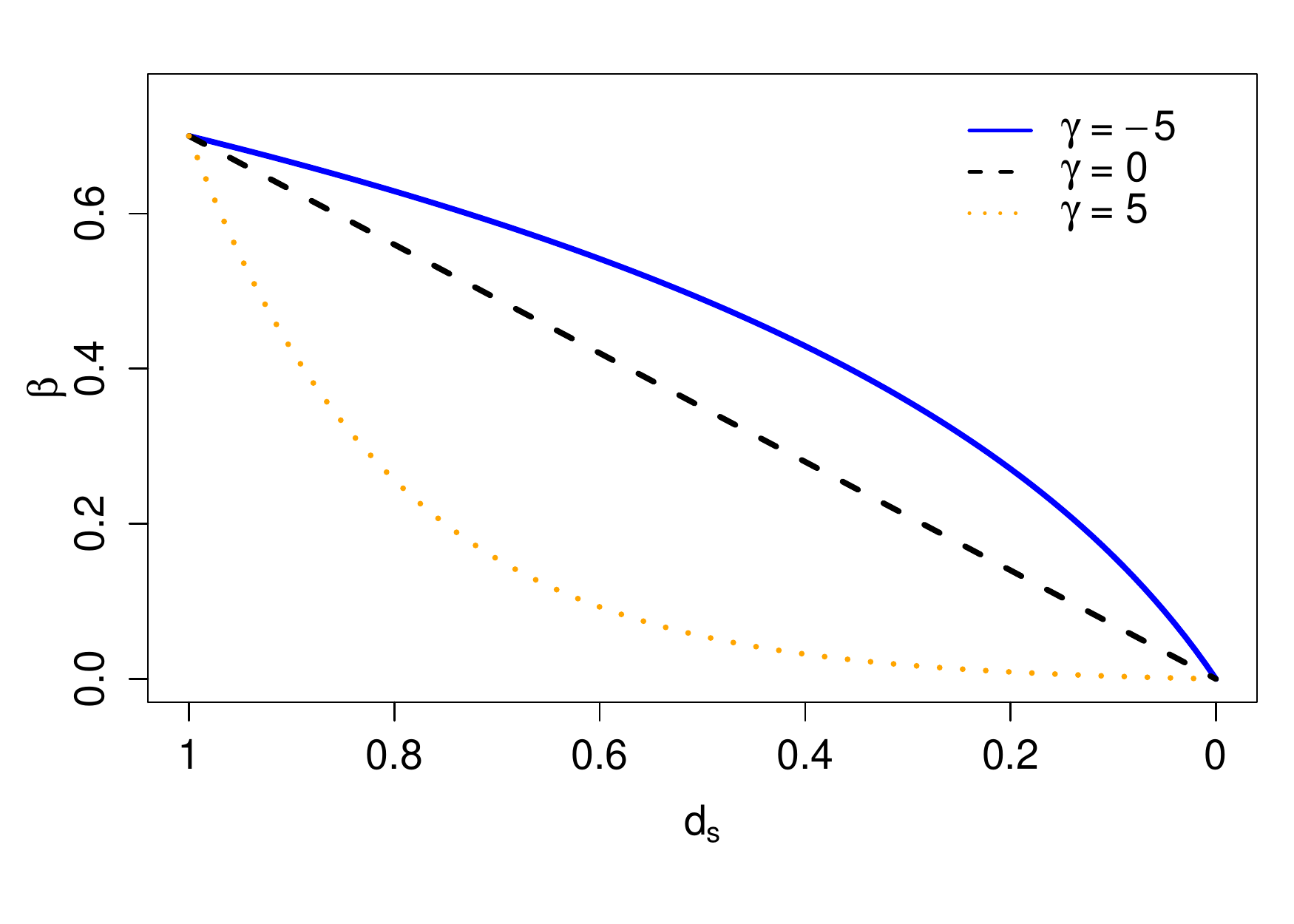}
\caption{Examples of the rejection rate annealing function with different values of $\gamma$ as defined in \eqref{eq_rej_rate}. Here $\gamma = -5, 0$, and $5$ respectively. A positive annealing rate would reduce the rejection rate quickly as the proportion of non-zeros weight values, $d_s$, decreases,  whereas a negative annealing rate would maintain it at a relatively high level.}\label{fig_demo_anneal_beta}
\end{figure}

\begin{algorithm}[tb]
   \caption{$\Psi(\mathcal{D})$: Stabilized truncated stochastic gradient descent for sparse learning.}
   \label{alg:stable_ssgd}

\begin{algorithmic}
	\STATE{\bfseries Input:} the training data $\mathcal{D}$.
   \STATE{\bfseries Parameters:} $\beta_0$, $\gamma$, $\pi_0$, $K$, $n_K$, $M$, $\eta$.
   \STATE{\bfseries Initialization:} For each $m \in \{1, \dots, M\}$, initialize $\tilde{\mathbf{w}}_0^{(m)} = \mathbf{0}_p$, $\tilde{\mathbf{k}} =  \mathbf{0}_p$ and $\mathcal{D}^{(m)}$ is a random permutation of $\mathcal{D}$; $\hat{\Omega}_0 = \{1, \dots, p\}$.
   \STATE {\bfseries For} $s = 1, 2, \dots$,
   \STATE Set $\hat{\mathbf{w}}_{0}^{(m)} =  \tilde{\mathbf{w}}_{s-1}^{(m)}$
   \REPEAT %
   		\STATE Obtain $g_{0,s}(\beta_s)$ as in \eqref{eq_adaptive_g}. 
   		\FORALL {$ m \in \{1, \dots, M\} $ {\bfseries parallel}} 
   			 \FOR{$\tau=1$ {\bfseries to} $n_K$}
   				 \STATE $\hat{\mathbf{w}}_{\tau}^{(m)} = B_2 \left(\hat{\mathbf{w}}_{\tau-1}^{(m)}, g_{0,s}(\beta_s), \hat{\Omega}_{s-1}, \mathcal{D}^{(m)} \right)$.
  			 \ENDFOR
   	    \ENDFOR
   		\STATE Compute $\hat{\Pi}_s$ as in \eqref{eq_sel_prob_2} and update the set of stable features $\hat{\Omega}_s = \left\{ j: \hat{\Pi}_{s,j} \geq \pi_0 \right\}$.
   		\STATE $\tilde{\mathbf{w}}_s^{(m)} = \hat{\mathbf{w}}_{n_K}^{(m)} \mathbbm{1}_{\hat{\Omega}_s}$, $m = 1, \dots, M$.
   		\STATE $\beta_{s+1} = \phi(d_{s}; \beta_0, \gamma)$ where $d_{s} = \frac{|\hat{\Omega}_{s}|}{p}$.
   		\STATE{\bfseries Aggregation:} $\underline{\mathbf{w}} = \frac{1}{M} \sum\limits_{m=1}^{M} \tilde{\mathbf{w}}_s^{(m)}$. 
   \UNTIL{$\underline{\mathbf{w}}$ converges}.
   \STATE {\bfseries Return:} $\underline{\mathbf{w}} $.
\end{algorithmic}
\end{algorithm}

By using adaptive gravity  \eqref{eq_adaptive_g} with annealed rejection rate \eqref{eq_rej_rate}, the amount of shrinkage is adjusted to the current level of sparsity of the weight vector quantified by the size of the stable set $|\hat{\Omega}_s|$ or the $L_0$ norm of the purged $\tilde{\mathbf{w}}$. Instead of tuning a fixed gravity parameter as in \cite{langford2009sparse}, for our proposed algorithm, we tune the annealing rate $\gamma$ and the maximum rejection rate $\beta_0$. Here $\gamma$ balances the trade-off between exploration and exploitation and $\beta_0$ determines the initial intensity of truncation. It enables the tuning process to be tailored to the data at hand as well as being comparable across different datasets. In Section~\ref{sec_emp_result}, the tuning results instantiate that a negative annealing rate is preferred for highly sparse data, such as the RCV1 dataset, since the a high rejection rate needs to be maintained longer allowing sufficient information of sparse features to be evaluated by the learning process. On the other hand, a positive annealing rate is chosen for relatively dense data, such as the Arcene dataset, where the high frequency of nonzero values permit fast reduction of the active set. The complete algorithm of the stabilized truncated stochastic gradient descent algorithm is summarized in Algorithm~\ref{alg:stable_ssgd}.


\section{Properties of the Stabilized Truncated Stochastic Gradient Descent Algorithm} \label{sec: properties}

The learning goal of sparse online learning is to achieve a low regret as defined in \eqref{eq_regret}. In this section, we analyze the online regret bound of the proposed stabilized truncated SGD algorithm in Algorithm~\ref{alg:stable_ssgd} with convex loss. For simplicity, the effect of adaptive gravity with annealed rejection rate is not considered here. To achieve viable result, we make the following assumptions.

\begin{assumption} \label{assump_1}
The absolute values of the weight vector $\mathbf{w}$ are bounded above, that is, $|w_j| \leq C$ for some $C \in [0, \infty)$, $j=1, \dots, p$. 
\end{assumption}

\begin{assumption} \label{assump_2}
The loss function $L(\mathbf{w}, z)$ is convex in $\mathbf{w}$, and there exist non-negative constants $A$ and $B$ such that, for all $\mathbf{w} \in \mathbb{R}^p$ and $z \in \mathbb{R}^{p+1}$, $||\nabla_{\mathbf{w}} L(\mathbf{w}, z)||^2 \leq AL(\mathbf{w}, z) + B$.  
\end{assumption}

For linear prediction problems, the class of loss function that satisfies Assumption 2 includes common loss functions used in machine learning problems, such as the $L_2$ loss, the hinge loss and the logistic loss, with the condition that $\sup_{\mathbf{x}} ||x|| \leq C_x$ for some constant  $C_x > 0$.

\begin{assumption} \label{assump_3}
Assume that the set $\mathcal{W}$ for identifying $\mathbf{w}^*$ has the following properties:
\begin{enumerate}
\item $\mathcal{W}$ is the parameter space for weight vectors $\mathbf{w}$  that is subject to a sparsity constraint $$ ||\mathbf{w}||_0 = d^*. $$
For the optimal weight vector $\mathbf{w}^*$, we denote $\Omega^* = \{j: |w^*_j| > 0\}$ and $d^* = |\Omega^*|$.
We further assume that $d^*$ is sufficiently small and the gravity parameter $g$ associated with $\mathbf{w}^*$ is reasonably large so that, for $\tau = 1, 2, \dots$, the average number of active selected features from each truncated bursts, $q^{\mathbf{g}}$, given a gravity $\mathbf{g}$, is greater than or equal to $d^*$.

\item Assume features $\mathbf{x}_1, \dots, \mathbf{x}_p$ has various sparsity distribution that $\Pr(|x_{i,j}| > 0) = \lambda_j$, where $\lambda_j \in [0,1]$, for $i = 1, \dots, n$, and $\mathbf{w}^*_j = 0$ if $\lambda_j = 0$, for $j=1, \dots, p$. 
\end{enumerate}
\end{assumption}

Assumption 3 posits that $\mathbf{w}$'s parameter space of interest is substantially sparse, which is the main focus of sparse learning and of this paper. The theoretical analysis in the following concerns for an fixed optimal weight vector $\mathbf{w}^*$, if exists, under such a constraint. Nevertheless, this condition does not confine the applicable scenarios of the proposed method to a fixed subclass of problems. It suggests a balance between the model sparsity and the value of the gravity parameter that is implicitly embedded within the parameter tuning process.

\begin{lemma}  \label{lemma1} 
Let $\hat{\mathbf{w}}$ be a non-stabilized dense weight vector, i.e., an output weight vector from Algorithm~\ref{alg:ssgd_3}. Let $\tilde{\mathbf{w}}$ be the stabilized weight vector derived from $\hat{\mathbf{w}}$, which is purged by the stability selection \eqref{eq_stability_selection} with a set of stable features $\hat{\Omega}$. Let $q^{\mathbf{g}}$ be the average number of nonzero entries in $\hat{\mathbf{w}}$'s of the previous truncated bursts, i.e.,the number of selected features from $n_k$ truncated bursts, with gravity $\mathbf{g}$. Then, if Assumption 1 holds, there exists an $\varepsilon \in \left( 0, \frac{\pi_0 C(p - |\hat{\Omega}|)}{|\hat{\Omega}|} \right)$ with $\hat{S}_{\varepsilon} = \{j: \mathbb{E} (|\hat{w}_j |) > \varepsilon \}$ such that the bound on the expected difference between the distance from the non-stabilized weight vector to $ \mathbf{w}^*$ under Assumption~\ref{assump_3} and the distance from the stabilized weight vector to $\mathbf{w}^*$ is given by

\begin{align} \label{eq: lemma1} 
\mathbb{E} \left( ||\hat{\mathbf{w}} - \mathbf{w}^*||^2 - ||\tilde{\mathbf{w}} - \mathbf{w}^* ||^2  \right)  & \geq \  \varepsilon^2 ( |\hat{S}_{\varepsilon}| - |\hat{\Omega} | ) + 2 \pi_0 C^2 \left[  \left( 1 - \frac{q^{\mathbf{g}}}{2\pi_0 p - p} \right) q^{\mathbf{g}} - d^* \right] 
\end{align}
When the purging threshold is sufficiently high such that $\pi_0 \in \left( \frac{1}{2} + \frac{(q^{\mathbf{g}})^2}{2p(q^{\mathbf{g}} - d^*)} , 1\right)$, 
$$\mathbb{E} \left( ||\hat{\mathbf{w}} - \mathbf{w}^*||^2 - ||\tilde{\mathbf{w}} - \mathbf{w}^* ||^2  \right)  \geq 0. $$
\end{lemma}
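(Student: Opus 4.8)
The plan is to exploit that $\tilde{\mathbf{w}}$ equals $\hat{\mathbf{w}}$ on the stable set $\hat{\Omega}$ and vanishes off it, so the squared-distance gap collapses to a coordinate sum over the purged indices. First I would write
\[
\|\hat{\mathbf{w}}-\mathbf{w}^*\|^2-\|\tilde{\mathbf{w}}-\mathbf{w}^*\|^2=\sum_{j\notin\hat{\Omega}}\bigl(\hat{w}_j^2-2\,\hat{w}_j w^*_j\bigr),
\]
take expectations over the subsampling, and note that, since $w^*_j=0$ for $j\notin\Omega^*$ (Assumption~\ref{assump_3}), the cross term runs only over $j\in\Omega^*\setminus\hat{\Omega}$. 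This reduces the claim to lower-bounding the nonnegative ``gain'' $\sum_{j\notin\hat{\Omega}}\mathbb{E}(\hat{w}_j^2)$ and upper-bounding the ``loss'' $2\sum_{j\in\Omega^*\setminus\hat{\Omega}}\mathbb{E}(\hat{w}_j)\,w^*_j$.

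For the loss, the key elementary observation is that since $|\hat{w}_j|\le C$ (Assumption~\ref{assump_1}) and $\hat{w}_j=0$ unless $j$ lies in the burst's active set, $\mathbb{E}|\hat{w}_j|\le C\,\Pi_j$; for $j\notin\hat{\Omega}$ the defining inequality $\Pi_j<\pi_0$ of \eqref{eq_stable_set} then gives $\mathbb{E}|\hat{w}_j|<C\pi_0$. Combined with $|w^*_j|\le C$ and $|\Omega^*\setminus\hat{\Omega}|\le d^*$, this bounds the loss by $2\pi_0 C^2 d^*$, which produces the $-\,d^*$ term inside the bracket of \eqref{eq: lemma1}.

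For the gain I would split $\hat{\Omega}^c$ into $\hat{S}_{\varepsilon}\setminus\hat{\Omega}$ and the rest. On the former, Jensen's inequality gives $\mathbb{E}(\hat{w}_j^2)\ge(\mathbb{E}|\hat{w}_j|)^2>\varepsilon^2$, and $|\hat{S}_{\varepsilon}\setminus\hat{\Omega}|\ge|\hat{S}_{\varepsilon}|-|\hat{\Omega}|$ yields the $\varepsilon^2(|\hat{S}_{\varepsilon}|-|\hat{\Omega}|)$ term. On the remaining purged coordinates I would again use $\mathbb{E}(\hat{w}_j^2)\ge(\mathbb{E}|\hat{w}_j|)^2$ and then pass to the aggregate via a Cauchy--Schwarz/convexity step over the $p-|\hat{\Omega}|$ purged indices, using $\sum_j\Pi_j=q^{\mathbf{g}}$ (the average active-set size of a burst, Assumption~\ref{assump_3}) and the upper restriction $\varepsilon<\pi_0 C(p-|\hat{\Omega}|)/|\hat{\Omega}|$ to certify the residual lower bound $2\pi_0 C^2\bigl(1-\tfrac{q^{\mathbf{g}}}{(2\pi_0-1)p}\bigr)q^{\mathbf{g}}$. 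Adding the three contributions gives \eqref{eq: lemma1}.

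The nonnegativity corollary then follows by forcing the bracket nonnegative: $\bigl(1-\tfrac{q^{\mathbf{g}}}{(2\pi_0-1)p}\bigr)q^{\mathbf{g}}\ge d^*$ rearranges, using $q^{\mathbf{g}}>d^*$ from Assumption~\ref{assump_3} (which keeps the denominator positive), to $2\pi_0-1\ge\tfrac{(q^{\mathbf{g}})^2}{p(q^{\mathbf{g}}-d^*)}$, i.e.\ $\pi_0>\tfrac12+\tfrac{(q^{\mathbf{g}})^2}{2p(q^{\mathbf{g}}-d^*)}$, together with $\varepsilon^2(|\hat{S}_{\varepsilon}|-|\hat{\Omega}|)\ge0$ (which holds once $\hat{\Omega}\subseteq\hat{S}_{\varepsilon}$, guaranteed for $\varepsilon$ small relative to $\pi_0$ and $C$). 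I expect the main obstacle to be the middle sub-step: converting the per-coordinate second moments of the many small, sub-$\varepsilon$ purged weights into the clean aggregate expression in $q^{\mathbf{g}}$ and $\pi_0$ while retaining the constant $2\pi_0 C^2$. This is precisely where the upper bound on $\varepsilon$ and the ``sufficiently sparse $d^*$, reasonably large gravity so $q^{\mathbf{g}}\ge d^*$'' clause of Assumption~\ref{assump_3} must be spent, and an over-generous estimate here would destroy the final threshold on $\pi_0$.
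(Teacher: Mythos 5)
Your decomposition is exactly the paper's starting point: the gap collapses to $\sum_{j\notin\hat{\Omega}}\hat{w}_j^2-2\sum_{j\in\Omega^*\setminus\hat{\Omega}}w_j^*\hat{w}_j$, the term $\varepsilon^2(|\hat{S}_{\varepsilon}|-|\hat{\Omega}|)$ is extracted from the purged squares in the same way, and the estimate $\mathbb{E}|\hat{w}_j|\le \pi_0 C$ for $j\notin\hat{\Omega}$ is also the paper's. But there is a genuine gap in where you try to generate the bracket term $2\pi_0C^2\bigl(1-\tfrac{q^{\mathbf{g}}}{(2\pi_0-1)p}\bigr)q^{\mathbf{g}}$. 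You assign it to the purged coordinates outside $\hat{S}_{\varepsilon}$, to be recovered by a Jensen/Cauchy--Schwarz step using $\sum_j\Pi_j= q^{\mathbf{g}}$. That step cannot work: those coordinates satisfy $\mathbb{E}|\hat{w}_j|\le\varepsilon$ by definition of $\hat{S}_{\varepsilon}$, so their aggregate second moment admits no lower bound of order $C^2q^{\mathbf{g}}$ (they could be identically zero, e.g.\ truncated in every burst); selection probabilities control how often a weight is nonzero, not how large it is, so no convexity argument over the $p-|\hat{\Omega}|$ purged indices can certify a positive constant times $C^2$. This is exactly the obstacle you flag yourself, and it is not a technical nuisance but a dead end.

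The paper obtains the entire bracket from the \emph{cross term}, not the gain. Where you use the trivial bound $|\Omega^*\setminus\hat{\Omega}|\le d^*$ (yielding only $-2\pi_0C^2d^*$), the paper invokes Theorem~1 of \citet{meinshausen2010stability} to bound the expected number of important features missing from the stable set: $\mathbb{E}|\Omega^*\setminus\hat{\Omega}|\le\bigl(\tfrac{q^{\mathbf{g}}}{2\pi_0p-p}-1\bigr)q^{\mathbf{g}}+d^*$, which is strictly tighter than $d^*$ precisely by the amount $\bigl(1-\tfrac{q^{\mathbf{g}}}{(2\pi_0-1)p}\bigr)q^{\mathbf{g}}$. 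Combining this with $\mathbb{E}|\hat{w}_j|\le\pi_0C$ and $|w_j^*|\le C$ gives the full term $2\pi_0C^2\bigl[(1-\tfrac{q^{\mathbf{g}}}{2\pi_0p-p})q^{\mathbf{g}}-d^*\bigr]$, and the threshold $\pi_0>\tfrac12+\tfrac{(q^{\mathbf{g}})^2}{2p(q^{\mathbf{g}}-d^*)}$ then falls out by the same rearrangement you describe. So the missing idea is the stability-selection error bound applied to $\mathbb{E}|\Omega^*\setminus\hat{\Omega}|$; without it, your loss bound of $2\pi_0C^2d^*$ is too weak and the positive part of the bracket cannot be recovered from the gain side.
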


\begin{proof}:
See Appendix~\ref{sec_proof_lemma1}.
\end{proof}

Lemma 1 quantifies the gain of using stabilization when $\mathbf{w}^*$ is highly sparse, where stability selection efficiently shrinks high variable estimates to zero. When the purging threshold $\pi_0$ is sufficiently high such that $\pi_0 \in \left( \frac{1}{2} + \frac{(q^{\mathbf{g}})^2}{2p(q^{\mathbf{g}} - d^*)} , 1\right)$, the lower bound achieved by \eqref{eq: lemma1} is guaranteed to be positive. Furthermore, this result also indicates that the expected difference between distances from the non-stabilized and stabilized weight vector to the sparse $ \mathbf{w}^*$ depends on the differences between the sizes of the temporary nonzero set of features before purging, $|\hat{S}_{\varepsilon}|$, and the size of the stable features after purging. In expectation, the stabilized weight vector is closer to the target sparse weight vector as the operation of purging efficiently reduces the size of stable features. This suggests a much faster convergence with stabilization. Lemma 1 also provides an insight on the benefit from using adaptive gravity with annealed rejection rate. At the beginning of the algorithm, the gap between the size of $|\hat{S}_{\varepsilon}|$ and the size of the set of stable features $\hat{|\Omega}|$ is large when aiming for extensive exploration of different sparse combination of features. Hence, the improvement brought by stabilization is more substantial during the early state of learning period. As the algorithm proceeds and the set of stable features becomes smaller and stabler, it dwindles the leeway that allows the aforementioned two sets to be different. Consequently, the proposed algorithm is gradually tuned toward the standard stochastic gradient descent algorithm to facilitate better convergence at the later period of the learning process.


\begin{lemma} \label{lemma2}

Let $\mathbf{w}_0$ be the weight vector at initialization. After the first burst,  let $\bar{\mathbf{w}}_1$ be the truncated weight vector using universal gravity $g_0K$ as in Algorithm~\ref{alg:ssgd_1} and let $\hat{\mathbf{w}}_1$ be the truncated weight vector with informative truncation as in Algorithm~\ref{alg:ssgd_2}. Then, under Assumption \ref{assump_3}, 
\begin{align*}
&\mathbb{E} \left( ||\bar{\mathbf{w}}_1 - \mathbf{w}^*||^2 - ||\hat{\mathbf{w}}_1 - \mathbf{w}^* ||^2  \right) 
\geq 2g_0 \sum\limits_{t=1}^{K} \norm{ \zeta_t \mathbf{w}^* }_1
\geq 0   \numberthis  \label{eq: lemma2}
\end{align*}
where $\zeta_{t,j} =  \mathbbm{1}(|x_{t,j}| = 0)$ for $j = 1, \dots, p$ and $\zeta_t = (\zeta_{t,1}, \dots, \zeta_{t,p})^T$.
\end{lemma}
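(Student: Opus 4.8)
\emph{Proof proposal.} The first step is to observe that both bursts are launched from the common initializer $\mathbf{w}_0$ and apply the \emph{identical} recursion $\mathbf{w}_t = \mathbf{w}_{t-1} - \eta L'(\mathbf{w}_{t-1}, z_t)$ to the \emph{same} random draws $z_1,\dots,z_K$ (for a linear predictor the $j$-th coordinate of each subgradient vanishes whenever $x_{t,j}=0$, so $\mathbf{w}_{K,j}$ is actually moved exactly $\tilde k_j = \sum_{t=1}^K\mathbbm{1}(|x_{t,j}|>0)$ times). Hence the two bursts reach the same pre-truncation iterate $\mathbf{w}_K$ and differ only through the terminal soft-threshold, $\bar{\mathbf{w}}_1 = T(\mathbf{w}_K,g_0K\mathbf{1}_p)$ versus $\hat{\mathbf{w}}_1 = T(\mathbf{w}_K,g_0\tilde{\mathbf{k}})$. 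Since $\zeta_{t,j}=1-\mathbbm{1}(|x_{t,j}|>0)$ we have $\tilde k_j = K-\sum_{t=1}^K\zeta_{t,j}$, so the universal gravity exceeds the informative one by exactly $c_j\triangleq g_0\sum_{t=1}^K\zeta_{t,j}\ge 0$ coordinatewise, and the composition property $T(\,\cdot\,,\mathbf{a}+\mathbf{b})=T(T(\,\cdot\,,\mathbf{a}),\mathbf{b})$ of the soft-threshold operator (valid for $\mathbf{a},\mathbf{b}\ge\mathbf{0}$) gives $\bar{\mathbf{w}}_1 = T(\hat{\mathbf{w}}_1,\mathbf{c})$: $\bar{\mathbf{w}}_1$ is $\hat{\mathbf{w}}_1$ after an \emph{extra} shrinkage of magnitude $c_j$ in coordinate $j$.

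Next I would bound the gap coordinate by coordinate. Writing $\norm{\bar{\mathbf{w}}_1-\mathbf{w}^*}^2-\norm{\hat{\mathbf{w}}_1-\mathbf{w}^*}^2 = \sum_{j=1}^p[(\bar w_{1,j}-w^*_j)^2-(\hat w_{1,j}-w^*_j)^2]$, I treat each summand with $\alpha^2-\beta^2=(\alpha-\beta)(\alpha+\beta)$, taking $\alpha=\bar w_{1,j}-w^*_j$ and $\beta=\hat w_{1,j}-w^*_j$, so that $\alpha-\beta=\bar w_{1,j}-\hat w_{1,j}$ is the extra-shrinkage increment. Two elementary facts about $T(\,\cdot\,,c)$ do the work: $\bar w_{1,j}$ and $\hat w_{1,j}$ share a sign (or $\bar w_{1,j}=0$), with $|\bar w_{1,j}|=\max(|\hat w_{1,j}|-c_j,0)\le|\hat w_{1,j}|$; and, by Assumption~\ref{assump_3}, $w^*_j=0$ exactly on the coordinates with $\lambda_j=0$, on which $\zeta_{t,j}\equiv 1$ is degenerate and which hence contribute nothing to $\sum_t\norm{\zeta_t\mathbf{w}^*}_1$, while the balance $q^{\mathbf{g}}\ge d^*$ keeps the signal coordinates $j\in\Omega^*$ in the active set so that the extra shrinkage $c_j$ pulls $\bar w_{1,j}$ farther from $w^*_j$ than $\hat w_{1,j}$ is. Summing the per-coordinate lower bounds isolates the linear term $2\sum_{j=1}^p c_j|w^*_j| = 2g_0\sum_{j=1}^p\big(\sum_{t=1}^K\zeta_{t,j}\big)|w^*_j| = 2g_0\sum_{t=1}^K\norm{\zeta_t\mathbf{w}^*}_1$ together with a nonnegative quadratic remainder; taking expectation over $z_1,\dots,z_K$ delivers \eqref{eq: lemma2}, and the final bound $\ge 0$ is then immediate because each $c_j|w^*_j|\ge 0$.

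I expect the main obstacle to be precisely the claim that the extra universal shrinkage genuinely \emph{increases} the distance to $\mathbf{w}^*$ on every relevant coordinate. Taken in isolation this can fail --- if $\hat w_{1,j}$ has already overshot $w^*_j$, shrinking it toward $0$ brings it closer --- so the argument must use Assumption~\ref{assump_3} (a sufficiently sparse $\mathbf{w}^*$, gravity large relative to $d^*$, and $w^*_j=0$ on the fully sparse coordinates) both to rule this out on the signal coordinates and to show that the coordinates with $w^*_j=0$ cannot cancel the gain. Controlling this sign-and-geometry interplay between the terminal iterate $\mathbf{w}_K$ and $\mathbf{w}^*$, uniformly over the randomness of the burst, is the delicate part of the proof; the soft-thresholding bookkeeping outlined above is routine.
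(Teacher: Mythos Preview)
Your opening observation---that both bursts share the pre-truncation iterate $\mathbf{w}_K$ (the paper calls it $\mathbf{h}_1$) and differ only in the terminal threshold---is exactly right, and the composition identity $\bar{\mathbf{w}}_1=T(\hat{\mathbf{w}}_1,\mathbf{c})$ is a clean way to see the relationship. The paper's decomposition is slightly different in form: rather than your coordinate-wise $\alpha^2-\beta^2$, it centers both squared distances at $\mathbf{h}_1$ and expands, producing a term $\|\bar{\mathbf{w}}_1-\mathbf{h}_1\|^2-\|\hat{\mathbf{w}}_1-\mathbf{h}_1\|^2\ge 0$ plus a cross term $-2(\bar{\mathbf{w}}_1-\hat{\mathbf{w}}_1)^T\mathbf{w}^*$ (and another piece in $\mathbf{h}_1$ that is handled by $K\ge\tilde k_j$). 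The bookkeeping is different but not the substance; either route isolates essentially the same obstruction, namely controlling $(\bar{\mathbf{w}}_1-\hat{\mathbf{w}}_1)^T\mathbf{w}^*$.

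Where your proposal has a real gap is in how you resolve that obstruction. You correctly flag that overshoot can make extra shrinkage \emph{decrease} the distance, and you propose to rule it out on signal coordinates via the $q^{\mathbf{g}}\ge d^*$ clause of Assumption~\ref{assump_3}. That clause, however, only says the average number of active features after truncation is at least $d^*$; it tells you nothing about whether $\hat w_{1,j}$ lies on the near or far side of $w^*_j$, and it cannot by itself prevent overshoot on any fixed realization. The paper's mechanism is different and is the step you are missing: it works in \emph{expectation} through a sign-alignment argument. Conditioning on $\mathrm{sign}(h_{1,j})$, one shows $\bar w_{1,j}-\hat w_{1,j}\le -g_0\sum_t\zeta_{t,j}$ when $h_{1,j}>0$ and the reverse when $h_{1,j}<0$; then the key inequality $\mathbb{E}[(\bar{\mathbf{w}}_1-\hat{\mathbf{w}}_1)^T\mathbf{w}^*]\le -g_0\sum_{t,j}\zeta_{t,j}|w^*_j|$ follows from $\mathbb{E}[\mathbbm{1}(h_{1,j}>0)\,w^*_j]>0$ and $\mathbb{E}[\mathbbm{1}(h_{1,j}<0)\,w^*_j]<0$, which the paper justifies by noting that $\mathbf{h}_1$ is a sum of $K$ stochastic gradient steps each unbiased for the true gradient (so in expectation $\mathbf{h}_1$ moves in the direction of $\mathbf{w}^*$). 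The part of Assumption~\ref{assump_3} that actually matters here is only the last clause, $w^*_j=0$ whenever $\lambda_j=0$, used to discard the degenerate coordinates. So: keep your decomposition if you like, but replace the ``$q^{\mathbf{g}}\ge d^*$ keeps the signal coordinates active'' argument with the expectation-level sign alignment of $h_{1,j}$ and $w^*_j$.
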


\begin{proof}:
See Appendix~\ref{sec_proof_lemma2}.
\end{proof}

In Lemma 2, we compare the distances towards $\mathbf{w}^*$ from \textit{1)} the weight vector with uniform gravity and \textit{2)} the weight vector with informative truncation that depends on the number of zero entries occurred in a burst. Such a gap suggests the effectiveness of informative truncation on sparse data in which feature sparsity is highly heterogeneous. In the scenarios where very few nonzero entries appear in a burst, the informative truncation imposes gravity that is proportional to the information presented in a burst. It is a fairer treatment than uniform truncation and leads to a large improvement in expectation. When features are all considerably dense in a burst, the informative truncation is equivalent to the uniform truncation.

In short, Lemma 1 demonstrates the improvement in expected squared error due to stabilization on the weight vector. Lemma 2, on the other hand, quantifies the improvements in reduce truncation bias when implementing informative truncation on sparse features with heterogeneous sparsity levels. 

Given Lemma 1 and Lemma 2, we have the expected regret bound of the proposed Algorithm \ref{alg:stable_ssgd} in Theorem 1. 

\begin{thm}

Consider the updating rules for the weight vector in Algorithm 3. On an arbitrary path, with $\underline{\mathbf{w}}_0 = 0$ and $\eta > 0$, let $\left\{ \underline{\mathbf{w}}_t \right\}_{t=1}^{T} $ be the resulted weight vector and $\left\{ \underline{g}_{t} \right\}_{t=1}^{T}$ be the gravity values applied to the weight vectors generated by Algorithm~\ref{alg:stable_ssgd}, along with the base gravity parameters $\left\{ \underline{g}_{0,t} \right\}_{t=1}^{T}$. Set the purging threshold $\pi_0$ to be sufficiently large such that $\pi_0 \in \left( \frac{1}{2} + \frac{(q^{\mathbf{g}})^2}{2p(q^{\mathbf{g}} - d^*)} , 1\right)$. If Assumption 1, 2, and 3 hold, then there exists a sequence of $\varepsilon_t \in \left( 0, \frac{\pi_0 C(p - |\hat{\Omega}_t|)}{|\hat{\Omega}_t|} \right)$ at each stability selection with the set of stable features $\hat{\Omega}_t$ such that the expectation of the regret defined in \eqref{eq_regret} is bounded above by

\begin{align*}
& \mathbb{E} \left( \sum\limits_{t=1}^{T} \left[  L(\underline{\mathbf{w}}_t, z_t) + K  \underline{g}_{t}  ||\underline{\mathbf{w}}_t||_1   \right]   - \sum\limits_{t=1}^{T} \left[  L(\mathbf{w}^*, z_t) + K \underline{g}_{t}    ||\mathbf{w}^*||_1   \right]  \right) \\
\leq &  \frac{\eta A}{2- \eta A} \left(  \mathbb{E} \left[ \sum\limits_{t=1}^{T}  L(\mathbf{w}^*, z_t)  + K \underline{g}_{0,t}  (||\mathbf{w}^*||_1 - ||\underline{\mathbf{w}}_t||_1 ) \right] \right) +  \frac{1}{2 - \eta A} \left( \eta T B + \frac{1}{\eta} || \mathbf{w}^*||^2 \right)  \\
 & - \frac{1}{2 \eta  - \eta^2 A } \sum\limits_{t=1}^{T} \varepsilon_t^2 \mathbbm{1} \left(\frac{t}{K n_K} \in \mathbb{Z} \right) ( |\hat{S}_{\varepsilon_t, t}| - |\hat{\Omega}_t|) \numberthis \label{eq: eq_0}
\end{align*}
where $\hat{S}_{\varepsilon, t} = \{j: \mathbb{E} (|\hat{w}_{t,j} |) > \varepsilon_t \}$ and $\hat{\mathbf{w}}_t$ is the weight vector at time $t$ before stabilization. 
\end{thm}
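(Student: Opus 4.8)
The plan is to start from the classical single-step SGD regret analysis of \cite{langford2009sparse} for the truncated-gradient update, applied on an arbitrary thread of Algorithm~\ref{alg:stable_ssgd}, and then graft on the two improvements quantified in Lemma~\ref{lemma1} and Lemma~\ref{lemma2}. First I would fix a thread $m$ and write, for each $t$, the one-step bound obtained by expanding $\|\underline{\mathbf{w}}_t-\mathbf{w}^*\|^2$. The key algebraic identity is the standard one: for the SGD step $\mathbf{w}_t=\mathbf{w}_{t-1}-\eta L'(\mathbf{w}_{t-1},z_t)$ followed (every $K$ steps) by soft-thresholding, one has
\begin{align*}
\|\underline{\mathbf{w}}_t-\mathbf{w}^*\|^2 \le \|\underline{\mathbf{w}}_{t-1}-\mathbf{w}^*\|^2 - 2\eta\bigl(L(\underline{\mathbf{w}}_{t-1},z_t)-L(\mathbf{w}^*,z_t)\bigr) + \eta^2\|L'(\underline{\mathbf{w}}_{t-1},z_t)\|^2 + (\text{truncation terms}),
\end{align*}
where convexity of $L$ (Assumption~\ref{assump_2}) gives the inner-product lower bound and the gradient-growth condition $\|\nabla L\|^2\le AL+B$ (Assumption~\ref{assump_2}) controls the quadratic term. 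Summing over $t=1,\dots,T$, telescoping the distance terms with $\underline{\mathbf{w}}_0=0$, and handling the $L_1$ penalty pieces exactly as in \cite{langford2009sparse} produces the first two terms on the right-hand side of \eqref{eq: eq_0}, namely the $\frac{\eta A}{2-\eta A}$ factor on the loss/penalty of $\mathbf{w}^*$ and the $\frac{1}{2-\eta A}(\eta TB+\frac1\eta\|\mathbf{w}^*\|^2)$ term; the $(2-\eta A)$ denominators appear when one moves the $\eta^2 A L(\underline{\mathbf{w}}_{t-1},z_t)$ contributions to the left side and divides.

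Next I would isolate, at each stage boundary $t$ with $t/(Kn_K)\in\mathbb{Z}$, the effect of the stabilization (purging) step. By Lemma~\ref{lemma1}, replacing $\hat{\mathbf{w}}_t$ by the stabilized $\tilde{\mathbf{w}}_t=\hat{\mathbf{w}}_t\cdot\mathbbm{1}_{\hat{\Omega}_t}$ decreases the squared distance to $\mathbf{w}^*$ by at least $\varepsilon_t^2(|\hat{S}_{\varepsilon_t,t}|-|\hat{\Omega}_t|)$ plus a nonnegative term (which, under the stated range of $\pi_0$, we may simply drop to get a clean bound). Feeding this extra decrement into the telescoped distance sum produces the final negative term $-\frac{1}{2\eta-\eta^2 A}\sum_t\varepsilon_t^2\,\mathbbm{1}(t/(Kn_K)\in\mathbb{Z})(|\hat{S}_{\varepsilon_t,t}|-|\hat{\Omega}_t|)$; the prefactor $\frac{1}{2\eta-\eta^2 A}=\frac{1}{\eta(2-\eta A)}$ is exactly what multiplies a distance decrement when converting it into a regret bound after the division above. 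Lemma~\ref{lemma2} is what legitimizes writing the per-burst truncation with the informative counter $\tilde{\mathbf{k}}$ rather than the universal $g_0K\mathbf{1}_p$: it certifies that the informative-truncation weight vector is (in expectation) at least as close to $\mathbf{w}^*$, so the same telescoping argument goes through with the informative gravity $\underline{g}_t$ in the penalty terms without degrading the bound. I would also need to take expectations over the random draws $z_t$ and the subsampling, using that Lemmas~\ref{lemma1} and~\ref{lemma2} are already stated in expectation, and invoke Assumption~\ref{assump_3} to guarantee $q^{\mathbf{g}}\ge d^*$ so that Lemma~\ref{lemma1}'s hypotheses hold at every stage.

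The main obstacle I anticipate is bookkeeping the interaction between the three nested loops — the $K$ SGD steps inside a burst, the $n_K$ bursts inside a stage, and the stages themselves — while keeping the telescoping of $\|\underline{\mathbf{w}}_t-\mathbf{w}^*\|^2$ valid across the discontinuous purging steps. In particular one must be careful that the reindexing of the weight vector after purging (coordinates of $\hat{\Omega}_t$ only) does not break the identity $\|\tilde{\mathbf{w}}_t-\mathbf{w}^*\|^2\le\|\hat{\mathbf{w}}_t-\mathbf{w}^*\|^2$; this is exactly where Assumption~\ref{assump_3}(2) ($\mathbf{w}^*_j=0$ whenever $\lambda_j=0$) and the range of $\pi_0$ matter, since we need the purged coordinates to be (with high probability) outside $\Omega^*$. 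A secondary technical point is that $\varepsilon_t$ must be chosen stage-by-stage in the admissible interval $\bigl(0,\pi_0 C(p-|\hat{\Omega}_t|)/|\hat{\Omega}_t|\bigr)$ so that Lemma~\ref{lemma1} applies with $\hat{\Omega}=\hat{\Omega}_t$; existence of such $\varepsilon_t$ follows from Lemma~\ref{lemma1} itself, so this is not a real gap, merely something to state carefully. Once these are in place, collecting the telescoped terms, the penalty terms, and the Assumption~\ref{assump_2} bounds yields exactly \eqref{eq: eq_0}.
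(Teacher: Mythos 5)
Your proposal matches the paper's proof in essentially every structural respect: the one-step expansion of $\|\hat{\mathbf{w}}_t-\mathbf{w}^*\|^2$ via the SGD update, convexity plus the growth condition $\|\nabla L\|^2\le AL+B$ from Assumption~\ref{assump_2}, telescoping with $\underline{\mathbf{w}}_0=0$, and injecting Lemma~\ref{lemma1} at the stage boundaries (dropping the nonnegative term guaranteed by the range of $\pi_0$) to obtain the $-\frac{1}{2\eta-\eta^2A}\sum_t\varepsilon_t^2\mathbbm{1}(t/(Kn_K)\in\mathbb{Z})(|\hat{S}_{\varepsilon_t,t}|-|\hat{\Omega}_t|)$ term, with the $(2-\eta A)$ denominators arising exactly as you describe. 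The only small deviation is that the paper never invokes Lemma~\ref{lemma2} in this proof; it controls the informative-truncation penalty directly by bounding the counters $\tilde{k}_{t,j}$ by $K$ in the inner-product term, which is an immaterial difference.
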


\begin{proof}:
See Appendix~\ref{sec_proof_thm1}.
\end{proof}

In the result of Theorem 1, the first two parts of the right-hand-side of the expected regret bound \eqref{eq: eq_0} is similar to the bound obtained in \cite{langford2009sparse}. It implies the trade-off between attained sparsity in the resulted weight vector and the regret performance. When the applied gravity is small under the joint effect of the base gravity $g_{0}$ and the size of each burst $K$, the sparsity is less but the expected regret bound is lower. On the other hand, when the applied gravity is large, the resulted weight vector is more sparse but at the risk of higher regret. Based on Lemma 1, the proposed algorithm is guaranteed to achieve lower regret bound in expectation when the target sparse weight vector is highly sparse. As quantified in the third term of the right-hand-side of  \eqref{eq: eq_0}, the improvement comes from the reduction of the active set at each purging. By its virtue, noisy features are removed from the set of stable features and thus are absent in later SGD updates and truncations. 

Theorem 1 is stated with a constant learning rate $\eta$. It is possible to obtain a lower regret bound in expectation with adaptive learning rate $\eta_t$ decaying with $t$, such as $\eta_t = \frac{1}{\sqrt{t}}$, which is commonly used in the literature of online learning and stochastic optimization. However, the discussion of using an varying learning rate is not a main focus of this paper and adds extra complexity of the analysis. Without knowing $T$ in advance, this may lead to a no-regret bound as suggested in \cite{langford2009sparse}. Instead, in Corollary 1, we show that the convergence rate of the proposed algorithm is $O(\sqrt{T})$ with $\eta = \frac{1}{\sqrt{T}}$. 


\begin{cor}
Assume that all conditions of Theorem 1 are satisfied. Let the learning rate $\eta$ be $\frac{1}{\sqrt{T}}$. The upper bound of the expected regret is 
$$
 \mathbb{E} \left( \sum\limits_{t=1}^{T} \left[  L(\underline{\mathbf{w}}_t, z_t) +  \underline{g}_t  ||\underline{\mathbf{w}}_t||_1   \right]   - \sum\limits_{t=1}^{T} \left[  L(\mathbf{w}^*, z_t) +  \underline{g}_t  ||\mathbf{w}^*||_1   \right]  \right) \leq O(\sqrt{T}),
 $$
 where $\underline{g}_t  = K \underline{g}_{0,t} $.
\end{cor}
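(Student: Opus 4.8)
The plan is to substitute $\eta = \frac{1}{\sqrt{T}}$ into the regret bound \eqref{eq: eq_0} of Theorem~1 and to check that each of its three groups of terms on the right-hand side is $O(\sqrt{T})$. Throughout I would work in the regime where $T$ is large enough that $\eta A = \frac{A}{\sqrt{T}} < 1$, so that $1 < 2 - \eta A < 2$ and the prefactors satisfy $\frac{1}{2-\eta A} < 1$ and $\frac{\eta A}{2-\eta A} \le \eta A$; this removes any issue of the denominators vanishing.

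First I would handle the leading group $\frac{\eta A}{2-\eta A}\,\mathbb{E}\bigl[\sum_{t=1}^{T} L(\mathbf{w}^*, z_t) + K \underline{g}_{0,t}(\|\mathbf{w}^*\|_1 - \|\underline{\mathbf{w}}_t\|_1)\bigr]$. By Assumptions~1 and 3, $\|\mathbf{w}^*\|_1 \le C d^*$ and $\|\underline{\mathbf{w}}_t\|_1 \le C p$ are bounded constants, and the adaptive base gravity $\underline{g}_{0,t}$ is uniformly bounded, being the supremum in \eqref{eq_adaptive_g}, which is finite because the empirical rejection probability is $0$ once $g_0$ exceeds the largest observed per-informative-update increment (itself bounded under Assumptions~1--2). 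For the standard linear-prediction losses satisfying Assumption~2 with $\sup_{\mathbf{x}}\|\mathbf{x}\| \le C_x$, $\mathbb{E}[L(\mathbf{w}^*, z)]$ is finite, so $\mathbb{E}\bigl[\sum_{t=1}^{T} L(\mathbf{w}^*, z_t)\bigr] = O(T)$. Hence the bracket is $O(T)$ and, multiplied by $\frac{\eta A}{2-\eta A} \le \frac{A}{\sqrt{T}}$, contributes $O(\sqrt{T})$.

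Next I would bound $\frac{1}{2-\eta A}\bigl(\eta T B + \frac{1}{\eta}\|\mathbf{w}^*\|^2\bigr)$: since $\frac{1}{2-\eta A} < 1$, and with $\eta = \frac{1}{\sqrt{T}}$ we get $\eta T B = B\sqrt{T}$ and $\frac{1}{\eta}\|\mathbf{w}^*\|^2 = \sqrt{T}\,\|\mathbf{w}^*\|^2$, both $O(\sqrt{T})$ with $\|\mathbf{w}^*\|^2 \le d^* C^2$. Finally, the last term $-\frac{1}{2\eta - \eta^2 A}\sum_{t=1}^{T} \varepsilon_t^2 \mathbbm{1}\bigl(\tfrac{t}{K n_K}\in\mathbb{Z}\bigr)\bigl(|\hat S_{\varepsilon_t, t}| - |\hat\Omega_t|\bigr)$: by the construction of the $\varepsilon_t$ in Theorem~1 (inherited from Lemma~1) one has $|\hat S_{\varepsilon_t, t}| \ge |\hat\Omega_t|$, so each summand is nonnegative and the whole term is $\le 0$; it may simply be dropped in the upper bound. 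Summing the three estimates gives $O(\sqrt{T})$, which is the claim.

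The main obstacle is not any deep estimate but the bookkeeping around the sign of this last term: its prefactor $\frac{1}{2\eta - \eta^2 A} = \frac{\sqrt{T}}{2-\eta A}$ grows like $\sqrt{T}$, and there are $\Theta\bigl(T/(K n_K)\bigr)$ nonzero summands, each with $|\hat S_{\varepsilon_t,t}| - |\hat\Omega_t|$ at most $p$ and $\varepsilon_t^2$ bounded, so if the difference $|\hat S_{\varepsilon_t,t}| - |\hat\Omega_t|$ were ever negative the term could flip sign and inject an $O(T^{3/2})$ contribution. Thus the proof hinges on confirming $|\hat S_{\varepsilon_t,t}| \ge |\hat\Omega_t|$ at every purge, together with the uniform boundedness of $\underline{g}_{0,t}$ and of $\mathbb{E}[L(\mathbf{w}^*, z_t)]$ under Assumptions~1--3, which is where the constant hidden in $O(\sqrt{T})$ is absorbed.
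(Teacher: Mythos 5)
Your proposal is correct and follows essentially the same route as the paper: substitute $\eta = \frac{1}{\sqrt{T}}$ into the bound of Theorem~1 and observe that each resulting term is $O(\sqrt{T})$, with the last term nonpositive because $|\hat{S}_{\varepsilon_t,t}| \geq |\hat{\Omega}_t|$ by the choice of $\varepsilon_t$ in Lemma~1. The paper simply declares the conclusion ``straightforward'' after the substitution, so your extra bookkeeping (boundedness of $\underline{g}_{0,t}$, of $\mathbb{E}[L(\mathbf{w}^*,z_t)]$, and the sign of the purging term) just makes explicit what the paper leaves implicit.
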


\begin{proof}
By plugging in $\eta = \frac{1}{\sqrt{T}}$ to the result from Theorem 1, we get 
\begin{align*}
 & \mathbb{E} \left( \sum\limits_{t=1}^{T} \left[  L(\underline{\mathbf{w}}_t, z_t) +  \underline{g}_t  ||\underline{\mathbf{w}}_t||_1   \right]   - \sum\limits_{t=1}^{T} \left[  L(\mathbf{w}^*, z_t) +  \underline{g}_t  ||\mathbf{w}^*||_1   \right]  \right) \\
 \leq &  \frac{A}{2\sqrt{T} - A} \left(  \mathbb{E} \left[ \sum\limits_{t=1}^{T}  L(\mathbf{w}^*, z_t)  +  \underline{g}_t (||\mathbf{w}^*||_1 - ||\underline{\mathbf{w}}_t||_1  \right] \right) \\
 & +  \frac{T}{2\sqrt{T} - A } \left( \eta T B + \frac{1}{\eta} || \mathbf{w}^*||^2 \right) -  \frac{T}{2\sqrt{T} - A } \sum\limits_{t=1}^{T} \varepsilon_t^2 \mathbbm{1} (\frac{t}{K n_K} \in \mathbb{Z} )  ( |\hat{S}_{\varepsilon, t}| - |\hat{\Omega}_t|). \\
\end{align*}
The result is then straightforward. 
\end{proof}

Assume that the input features have $d$ nonzero entries on average. With linear prediction model $f_{\mathbf{w}}(x) = \mathbf{w}^T x$, the computational complexity at each iteration is $O(d)$. Leveraging the sparse structure, the informative truncation only requires an additional $O(Kd)$ space for recording the counters. The purging process of stability selection consumes $O(\delta)$, $\delta = K n_K M d$, space for storing the generated intermediate weight vectors and $O(\delta \log(\delta))$ computational complexity. Both storage and computational cost decrease when the set of stable features diminishes as the algorithm proceeds.  Since the parameters $K$, $n_K$, and $M$ is normally set to be small values, the complexity mostly depends on $O(d \log (d))$. In summary, the proposed algorithm scales with the number of nonzero entries instead of the total dimensions, making it appealing to high-dimensional applications.

\section{Practical Remarks} \label{sec_practical_remarks}

When implementing Algorithm~\ref{alg:stable_ssgd} in practice, the performance can be further improved in terms of both accuracy and computational efficiency by employing a couple of practical techniques. It includes applying {\em informative purging} and attenuating the truncation frequency to achieve more accurate sparse learning and steadier convergence.

The first improvement can be implemented by better addressing the issue of scarcity of incoming samples. For computing selection probabilities, instead of using only information from the current stage, we can inherit information from  previous stages for features that are too scarce to accumulate enough updates during one stage. Specifically, we introduce an \textit{accumulated counter} $\kappa_s$ at stage $s$ as the total number of times that a feature is updated within a burst during this stage:
$$
\kappa_{j,s} = \sum\limits_{m=1}^{M} \sum\limits_{\tau=1}^{n_K} \tilde{k}_{j,\tau}^{(m)}, \quad j=1, \dots, p,
$$
which is essentially the denominator of the selection probability in \eqref{eq_sel_prob_2}. Similarly, we define an \textit{accumulated truncation indicator} $\mathbf{b}_s$ at stage $s$ as the total number of times that a feature is truncated to zero given valid update(s):
$$
\mathbf{b}_{j,s} = \sum\limits_{m=1}^{M}  \sum_{\tau: \tilde{k}^{(m)}_{j,\tau} > 0}  \mathbbm{1}(|\hat{\mathbf{w}}^{(m)}_{j,\tau} | > 0), \quad j=1, \dots, p.
$$

A feature is then evaluated in the stability selection \textit{only} if there are enough updates from the present stage and from any unused information carried over from previous stages. Given a threshold $\delta_K \geq 0$, let $\tilde{\kappa}_{j,s} \triangleq  \tilde{\kappa}_{j,s-1} \mathbbm{1}(\tilde{\kappa}_{j,s-1} < \delta_K) + \kappa_{j,s}$ and $\tilde{\mathbf{b}}_{j,s} \triangleq \tilde{\mathbf{b}}_{j,s-1} \mathbbm{1}(\tilde{\kappa}_{j,s-1} < \delta_K) + \mathbf{b}_{j,s}$. The selection probability is modified as 
\begin{equation} \label{eq_sel_prob_3}
\hat{\Pi}_{j,s} = \left\{ \begin{array}{ll}
\frac{ \tilde{\mathbf{b}}_{j,s}}{\tilde{\kappa}_s }, & \mbox{for $j$ s.t.}  \tilde{\kappa}_s > \delta_{K} \\
1, & \mbox{otherwise}
\end{array}
\right., \quad \mbox{ for } j=1, \dots, p.
\end{equation}

This strategy extends the key idea in Section~\ref{sec_informative_truncation} that, with sparse data, each decision need to be based on sufficient evidence. Using the ``carried-over'' information allows the algorithm to utilize information available in a sequence of SGD updates while attuned to the needs of features with different levels of sparsity. In practice, this modification facilitates faster convergence especially for ultra-sparse data. 


The second practical strategy is that the size of each burst, $K$, can be adaptively adjusted in a similar fashion as the rejection rate $\beta$ in \eqref{eq_rej_rate}. At the end of each stage, the burst size $K_s$ is updated as 
$$
K_s = \ceil[\Big]{K_0 \log \left( \frac{1}{\alpha d_{s-1}} \right)}, 
$$ 
where $K_0 > 0$ is the initial burst size and, as in \eqref{eq_rej_rate}, $d_s = \frac{| \hat{\Omega}_s|}{p}$. The tuning parameter $\alpha > 0$ adjusts the annealing rate of the truncation frequency. Although the result in Theorem 1 is based on a fixed $K$, it can be easily shown that the same upper bound can also be attained with an increasing $K_s$. By increasing $K$ in the later stage of the algorithm, when the majority of irrelevant features have been removed from the stable set, the chance of erroneous truncation is reduced. Such scheme further steers the algorithm from the mode of exploring potential sparse combination of features in the early stage toward the fine tuning of the weight vector by exploiting information from more samples in a sequence. It also facilitates faster convergence as the size of the stable set approaches to a sufficiently small number, as the algorithm converges to the standard stochastic gradient descent approximately.   

%
%

\section{Results}  \label{sec_emp_result}

In this section, we present experimental results evaluating the performance of the proposed stabilized truncated SGD algorithm in high-dimensional classification problems with sparsity regularization. In this paper, we focus on linear prediction model for binary classification where $f_{\mathbf{w}}(x) = \mathbf{w}^T x$ and $\hat{y} = \mbox{sign}(f_w(x))$ with the observed class label $y \in \{-1, 1\}$. We consider two commonly used convex loss functions in machine learning tasks that both satisfy Assumption~\ref{assump_1}:

\begin{itemize}
\item Hinge loss: $l(f, y) = \max(1-fy, 0)$
\item Logistic loss: $l(f, y) = \log\left(1+ \exp(-fy) \right)$
\end{itemize}

Using five datasets from different domains, the performance of our algorithm and other algorithms for comparison are evaluated on classification performance and feature selection stability and sparsity. We first define measure of feature stability in Section~\ref{sec:feature_stab}. 


\subsection{Feature Selection Stability} \label{sec:feature_stab}

The goal of sparse learning is to select a subset of truly informative features with stabilized estimation variance as well as increased classification accuracy and model interpretability. Subgradient-based online learning methods depend heavily by the random ordering of samples on which they are fed to the algorithm. Such dependence leads to much deteriorated performance when it comes to high-dimensional sparse inputs. For a particular feature, the positions of its nonzero occurrences in a random ordering of samples greatly affect its learning outcome, in terms of learnt weight and membership in the set of selected features. Therefore, in addition to attaining a low generalization error, a desirable sparse online learning method should also produce an informative feature subset that is stable and robust to random permutations of input data. To evaluate feature selection stability of subgradient-based sparse learning methods, we define in the following a numerical measure of similarity between selected feature subsets resulted from different random permutations of data. Given an output weight vector $\underline{\mathbf{w}}$ from a subgradient-based algorithm with input data $\mathcal{D}$, similarly as in \eqref{eq_selected_set}, we denote the selected feature subset as
$$
\mathcal{S}(\underline{\mathbf{w}}; \mathcal{D} )= \{ j: |\underline{w}_j| > 0, \underline{\mathbf{w}} = \Psi(\mathcal{D} \}.
$$

Given two random permutations of the training data $\mathcal{D}$, $ \mathcal{D}^{(1)}$ and $ \mathcal{D}^{(2)}$, the similarity between the two sets of selected feature subsets $\mathcal{S}_1 = \mathcal{S}(\underline{\mathbf{w}}_1; \mathcal{D}^{(1)} )$ and $\mathcal{S}_2 = \mathcal{S}(\underline{\mathbf{w}}_2; \mathcal{D}^{(2)} )$ is measured by the Cohen's kappa coefficient \citep{cohen1960kappa},
\begin{equation*}
\kappa(\mathcal{S}_1, \mathcal{S}_2) = \frac{q_o - q_e}{1- q_e},
\end{equation*}
where $q_o$ is the \textit{relative observed agreement} between $\mathcal{S}_1$ and $\mathcal{S}_2$:
$$
q_o = \frac{ p_{11}+p_{22}}{p},
$$
and $q_e$ is the \textit{hypothetical probability of change agreement}: $\mathcal{S}_1$ and $\mathcal{S}_2$
$$
q_e = \frac{(p_{11} + p_{12})(p_{11} + p_{21})}{p^2} + \frac{(p_{12} + p_{22})(p_{21} + p_{22})}{p^2}
$$
with $p_{11} = |\mathcal{S}_1 \cap \mathcal{S}_2|$, $p_{12}  = |\mathcal{S}_1 \cap \mathcal{S}^C_2|$, $p_{21} = |\mathcal{S}^C_1 \cap \mathcal{S}_2|$,  $p_{22}= |\mathcal{S}^C_1 \cap \mathcal{S}_2^C|$, and $p=p_{11} + p_{12} + p_{21} + p_{22}$ is the size of variable pool.

Note that $\kappa(\mathcal{S}_1, \mathcal{S}_2) \in [-1, 1]$, where $\kappa(\mathcal{S}_1, \mathcal{S}_2)=1$ if $\mathcal{S}_1$ and $\mathcal{S}_2$ completely overlap with each other and $\kappa(\mathcal{S}_1, \mathcal{S}_2)= -1$ when $\mathcal{S}_1$ and $\mathcal{S}_2$ are in complete disagreement with $\mathcal{S}_1 \cap \mathcal{S}_2 = \emptyset$ and $\mathcal{S}^C_1 \cap \mathcal{S}_2^C = \emptyset$.

Based on Cohen's kappa coefficient, we define the measure of feature selection stability of $\mathcal{S}(\underline{\mathbf{w}}; \cdot)$ returned by a procedure $\Psi$ using randomly ordered data $\mathcal{D}^{(1)}$ and $\mathcal{D}^{(2)}$ as
\begin{equation*}
s(\Psi) = \mathbb{E}_{\mathcal{D}^{(1)}, \mathcal{D}^{(2)}} \left[ \kappa(\mathcal{S}(\underline{\mathbf{w}}_1; \mathcal{D}^{(1)} ), \mathcal{S}(\underline{\mathbf{w}}_2; \mathcal{D}^{(2)} ) \right],
\end{equation*}
which is motivated by \cite{sun2013consistent}.

In practice, we use the empirical average $\hat{s}(\Psi)$ over $B$ random permutations of the training data to measure the stability of the a subgradient-based online learning algorithm $\Psi(\cdot)$:
\begin{equation} \label{eq_s_stab}
\hat{s}(\Psi) = \frac{1}{B(B-1)}\sum\limits_{i=1}^{D} \sum\limits_{j \neq i} \left[ \kappa(\mathcal{S}\underline{\mathbf{w}}_i; \mathcal{D}^{(i)} ), \mathcal{S}(\underline{\mathbf{w}}_j; \mathcal{D}^{(j)} ) \right].
\end{equation}

\subsection{Experiment Setup}

We evaluate the performance of our algorithm on several real-world classification datasets with up to $100,000$ features. These datasets have different levels of sparsity with various sample sizes. The information of experiment datasets are summarized in Table~\ref{tbl:data_summary}. The first four datasets were constructed for NIPS 2003 Feature Selection Challenge\footnote{Data source: \url{https://archive.ics.uci.edu/ml/index.html}} \citep{guyon2004result}, which were preprocessed with added ``probes" as random features distributed similarly to the real features. Thus, a good performance does not only lie in low generalization error rates, but also in sparse weight vectors that identify the truly important features. Reuters CV1 (RCV1) is a popular text classification dataset with a bag-of-words representation. We use the binary version from the LIBSVM dataset collection\footnote{Data source:   \url{https://www.csie.ntu.edu.tw/~cjlin/libsvmtools/datasets/}} introduced in \cite{cai2012manifold}. We create the training and validation set using a 70-30 random splits. All datasets are normalized such that each feature has variance 1. 


\begin{table}[H]
\caption{Datasets used in the numerical experiments. Sparsity is defined as the average feature sparsity levels, which is the column-wise average percentage of nonzero entries, in the training data.}
 \label{tbl:data_summary}
\vskip 0.15in
\begin{center}
\begin{small}
\begin{tabular}{|c | c | r | R{1.9cm} | R{1.3cm} | R{1.5cm} |}
\hline
Dataset & Domain & Dimensions & Data Density (Sparsity) & Training Size & Validation Size \\
\hline
RCV1 & Text Mining & 47,236 & 0.16\% & 14,169 & 6,073 \\
Dexter & Text Mining & 20,000 & 0.48\% & 300 & 300 \\
Dorothea & Drug Discovery & 100,000 & 0.91\% & 800 & 350 \\
Gisette & Digits Recognition & 5,000 & 13.00\% & 6,000 & 1,000 \\
Arcene & Mass-Spectrometry & 10,000 & 50.00\% & 100 & 100 \\
\hline
\end{tabular}
\end{small}
\end{center}
\vskip -0.1in
\end{table}

We compare the proposed algorithm with the standard stochastic gradient descent algorithm \cite{bottou1998online} and another three other sparsity-inducing stochastic methods, including the truncated gradient algorithm \citep{langford2009sparse}, the Regularized Dual Averaging (RDA) algorithm \citep{xiao2009dual}, and the forward backward splitting (FOBOS) algorithm \citep{duchi2009efficient}. The RDA algorithm updates the weight vector at each step based on a running average  $\bar{\mathbf{g}}_t$ of all subgradients $\{\mathbf{g}_\tau = L' \left( \mathbf{w}_{\tau}, z_{\tau} \right) \in \partial_{\mathbf{w}_{\tau}} L \left( \mathbf{w}_{\tau}, z_{\tau} \right), \tau = 1, \dots, t  \}$ in previous iterations as
$$
\bar{\mathbf{g}}_t = \frac{t-1}{t} \mathbf{g}_{t-1} + \frac{1}{t}\mathbf{g}_t
$$
Given the average subgradient , the next weight vector is computed by solving the minimization problem
\begin{align} \label{eq_rda_opt}
\mathbf{w}_{t+1} = \underset{\mathbf{w}}{\operatorname{\mbox{arg}\min}} \left\{ \frac{1}{t} \sum\limits_{\tau}^{t} \langle L'(\mathbf{w}_{t}, z_t), \mathbf{w} \rangle + \Psi(\mathbf{w}) + \frac{\beta_t}{t} h(\mathbf{w}) \right\},
\end{align}
where $\Psi(\mathbf{w})$ is the regularizer,  $h(w)$ is an auxiliary strongly convex function, and $\{\beta_t\}_{t \geq 1}$ is a nonnegative and nondecreasing
input sequence, which determines the convergence properties of the algorithm.

In the context of $L_1$ regularization, the RDA algorithm is derived by setting $\Psi(\mathbf{w}) = \lambda ||\mathbf{w}||_1$, $\beta_t = \gamma \sqrt{t}$, and replacing $h(w)$ with a parametrized version:
$$
h_{\rho} = \frac{1}{2} ||\mathbf{w}||_2^2 + \rho ||\mathbf{w}||_1,
$$
where $\rho \geq 0$ is a sparsity-enhancing parameter. Hence, the minimization problem in \eqref{eq_rda_opt} has a explicit solution as, for $j = 1, \dots, p$,
$$
\mathbf{w}_{t+1}^{(j)} = \left\{
\begin{array}{ll}
0, & \mbox{if } |\bar{\mathbf{g}}_{t}^{(j)} | \leq \lambda_t^{\mbox{RDA}}\\
-\frac{\sqrt{t}}{\gamma} \left( \bar{\mathbf{g}}_{t}^{(j)} - \lambda_t^{\mbox{RDA}} \mbox{sgn} (\bar{\mathbf{g}}_{t}^{(j)}) \right), & \mbox{otherwise,}
\end{array}
\right. 
$$
which is equivalent to 
$$
\mathbf{w}_{t+1} = T(\bar{\mathbf{g}}_{t}, \lambda_t^{\mbox{RDA}}),
$$
where $\lambda_t^{\mbox{RDA}} = \lambda + \gamma \rho / \sqrt{t}$.

The FOBOS algorithm alternates between two phases. On each iteration, it first perform an unconstrained gradient descent step as in the standard SGD algorithm, whose output is denoted as $\mathbf{w}_{t + \frac{1}{2}}$. Then it cast and solve an instantaneous optimization problems that trades off between minimization of a regularization term and a close proximity to the result of the first phase:
\begin{equation} \label{eq_fobos_opt}
\mathbf{w}_{t+1} = \underset{\mathbf{w}}{\operatorname{\mbox{arg}\min}} \left\{ \frac{1}{2} ||\mathbf{w} - \mathbf{w}_{t + \frac{1}{2}} ||^2 + \eta_{t + \frac{1}{2}}\Psi(\mathbf{w})\right\},
\end{equation}
where the regularization function is scaled by an interim step size $\eta_{t + \frac{1}{2}}$.

With $L_1$ regularization where $\Psi(w) = \lambda ||\mathbf{w}||_1$, the second-phase update can be computed as 
$$
\mathbf{w}_{t+1} = T(\mathbf{w}_{t+\frac{1}{2}},  \eta_{t + \frac{1}{2}} \lambda).
$$

To evaluate the stability of resulted weight vectors, we randomly permute the indices of the training samples for $B=50$ times to produce stochastic samples that are fed to the algorithms. Such randomization helps identify the instability in learning results in terms of both error rate and the selected features. 

For implementing all five stochastic methods, we allow the algorithms to run on the training data for $\{5, 10, 20, 30, 40, 50, 60\}$ passes. In the standard stochastic gradient descent algorithm, we choose the optimal learning rate between 0.1 to 0.5. The base gravity parameter in the truncated gradient algorithm is chosen from the range $[0.001, 0.01]$. For RDA, we follow the suggestions in \cite{xiao2009dual} which sets $\gamma = 5000$ and $\rho = 0.005$ (effectively $\gamma \rho = 25)$. We tune the parameter $\lambda$ from the set of values $\{5\times10^{-5}, 1\times10^{-4},  5\times10^{-4},  1\times10^{-3}, 0.01, 0.05, 0.1, 0.5, 1, 5, 10 \}$ for both RDA and FOBOS. As instructed in \cite{duchi2009efficient}, $\eta_t$ is set to be $\frac{1}{\sqrt{t}}$ in FOBOS. For the proposed stabilized truncated SGD algorithm, the size of each burst (the truncation frequency) is fixed at $K=5$ and $n_K = 5$ for all datasets. In the proposed algorithm, we initiate the rejection rate at $\beta_0 = 0.7$ with annealing rate chosen from $\{-7, -5, -3, -1, 0, 1, 3\}$. The stability selection threshold $\pi_0$ is tuned within the range $[0.5, 0.6, \dots, 0.9]$. For multi-thread implementation in Algorithm~\ref{alg:stable_ssgd}, $M=16$ is used running in parallel on a high-performance computing cluster. 

\subsection{Results}

\subsubsection{Classification Performance}

As shown in Table~\ref{tbl:test_error_summary}, the proposed algorithm shows improvement over the truncated gradient algorithm over all datasets and has better performances than RDA and FOBOS in most of the experiments. As data density increases, the truncated gradient algorithm performs better with hinge loss than with logistic loss. With hinge loss, the algorithm updates the weight vector only for samples within a small margin from the boundary. With logistic loss, the algorithm updates with continuous increments for all incoming samples and favors dense features in sparse learning. With highly sparse samples sequentially feed to the algorithm, the truncation in every $K$ iterations is conducted with insufficient information about the true gradient due to the lack of nonzero entries in sparse features. Truncation with logistic loss leads to over selection of dense features and overfitting. The other two sparsity-inducing methods for comparison, RDA and FOBOS, also have large fluctuations across data sets with different levels of sparsity over these two loss functions, especially when data is highly sparse. In comparison, the performance of the proposed algorithm is consistent for various dimensions and sparsity levels. Our algorithm is also shown to be robust to different choices of loss function. The comparatively lower test errors, especially for highly sparse data, mainly owes to the proposed algorithm's fairer treatments of features with heterogeneous sparsity. 

From Table~\ref{tbl:test_error_summary} we also observe substantially lower variances in test errors under random permutations of samples. The proposed algorithm has the lowest standard deviations of test errors across all datasets and under both loss functions. Such an improvement over truncated gradient algorithm and other comparing methods comes from both informative truncation with adaptive gravity and stability selection. Based on these selection probabilities we carry out feature purging. Our selection probability is computed based on multiple bursts whose truncation is guided by fair amounts of shrinkage. Hence, the removal decisions of features from the set of stable variables are grounded in reliable information on feature importance, which is more likely to be shared across different permutations of the data. The accumulations of sufficient information for all features help the proposed algorithm to be robust to random fluctuations in online setting.

\begin{table}[h]
\centering
\caption{Mean test errors (\%) and the corresponding standard deviations with hinge loss and logistic loss over 50 random ordering of the training samples.}
\label{tbl:test_error_summary}
\vskip 0.15in
\begin{footnotesize}
\begin{tabular}{|c|c ||  R{1.2cm} |  R{1.3cm} | R{1.1cm} | R{1.1cm} |R{1.6cm} |}
\hline
    & Dataset  & \cellCenter{Standard SGD}  & Truncated Gradient & RDA & FOBOS & Stabilized Truncated SGD \\
\hhline{|=|=|=|=|=|=|=|}
& RCV1     & \textbf{2.86} & 6.59               & 4.62         & 8.44  & 3.01                     \\
 &  & \textit{.08} & \textit{.09} & \textit{.10} & \textit{.15}  & \textit{.04}                   \\
              & Dexter   & 7.77          & 8.37               & 11.42        & 10.91 & \textbf{6.58}            \\
              &          & \textit{.96} & \textit{.91}               & \textit{.59}         & \textit{1.21}  & \textit{.57}                      \\
Hinge Loss    & Dorothea & 6.11          & 6.83               & 6.83         & 8.51  & \textbf{5.14}            \\
              &          & \textit{.60}            & \textit{1.23}                & \textit{.18}          & \textit{1.96}   & \textit{0.43}                      \\
              & Gisette  & 2.71          & 6.32               & \textbf{2.40} & 5.50   & 3.05                     \\
              &          & \textit{.43}           & \textit{3.02}                & \textit{.45}          & \textit{1.20}    & \textit{.30}                      \\
              & Arcene   & 19.26         & 22.06              & 22.10         & 18.72 & \textbf{17.60}            \\
              &          & \textit{3.67}           & \textit{8.08}                & \textit{4.00}          & \textit{3.26}   & \textit{2.62}                     \\
\hhline{|=|=|=|=|=|=|=|}
              & RCV1     & 3.68          & 18.55              & 5.69         & 14.18 & \textbf{3.19}            \\
              &          & \textit{.07}          & \textit{.78}               & \textit{.08}         & \textit{.39}  & \textit{.05}                     \\
              & Dexter   & 7.91          & 10.25              & 9.41         & 10.45 & \textbf{6.41}            \\
              &          & \textit{.36}          & \textit{.58}               & \textit{.68}          & \textit{.81}   & \textit{.43}                      \\
Logistic Loss & Dorothea & 6.26          & 7.29               & 6.87         & 7.15  & \textbf{5.47}            \\
              &          & \textit{.62}           & \textit{1.28}                & \textit{.31}          & \textit{1.35}   & \textit{.35}                      \\
              & Gisette  & 2.52          & 6.35               & 2.21         & 5.98  & \textbf{2.11}            \\
              &          & \textit{.29}           & \textit{4.08}                & \textit{.20}           & \textit{.85}   & \textit{.26}                      \\
              & Arcene   & 17.38         & 19.48              & 21.62        & 18.44 & \textbf{13.38}           \\
              &          & \textit{2.93}          & \textit{5.01}               & \textit{2.05}          & \textit{3.11}   & \textit{1.26}           \\         
              \hline   
\end{tabular}%
\end{footnotesize}
\vskip -0.1in
\end{table}

\subsubsection{Feature Selection and Sparsity}

\begin{table}[h]
\centering
\caption{The average percentages (\%) of nonzero features selected in the resulted weight vector and the corresponding standard deviations with hinge loss and logistic loss over 50 random permutations of the training samples.}
 \label{tbl:sparsity_summary}
\vskip 0.15in
\begin{footnotesize}
\begin{tabular}{|c|c ||  R{1.2cm} |  R{1.3cm} | R{1.1cm} | R{1.1cm} |R{1.6cm} |}
\hline
 & Dataset & Standard SGD & Truncated Gradient & RDA & FOBOS & Stabilized Truncated SGD \\
\hhline{|=|=|=|=|=|=|=|} 
 & RCV1 & 63.68 & 6.95 & 11.38 & 5.01 & 0.86 \\
 &  & \textit{.34} & \textit{.42} &\textit{ .10} & \textit{.28} & \textit{.10} \\
 & Dexter & 29.46 & 14.74 & 1.61 & 37.69 & 1.98 \\
 &  & \textit{.35} & \textit{.82} & \textit{.06} & \textit{2.43} & \textit{.15} \\
Hinge Loss & Dorothea & 39.84 & 18.62 & 0.55 & 14.89 & 6.68 \\
 &  & \textit{.62} & \textit{5.94} &\textit{ .02} & \textit{3.37} &\textit{ .09 }\\
 & Gisette & 90.88 & 63.8 & 17.45 & 61.81 & 3.84 \\
 &  & \textit{.21} & \textit{3.72} & \textit{.43} & \textit{3.94} & \textit{.34} \\
 & Arcene & 98.22 & 61.08 & 12.15 & 54.5 & 3.67 \\
 &  & \textit{.07} & \textit{25.55} & \textit{.90} & \textit{13.37} & \textit{.40} \\
 \hhline{|=|=|=|=|=|=|=|}
 & RCV1 & 82.31 & 2.75 & 6.98 & 2.89 & 1.44 \\
 &  & \textit{.00} & \textit{.35} & \textit{.05} & \textit{.29} &\textit{ .12} \\
 & Dexter & 38.76 & 13.91 & 4.84 & 30.21 & 1.32 \\
 &  & \textit{.00} & \textit{.71} & \textit{.14} & \textit{1.87} & \textit{.13} \\
Logistic Loss & Dorothea & 68.63 & 19.31 & 0.43 & 20.4 & 1.67 \\
 &  & \textit{.00} & \textit{3.00} & \textit{.01} & \textit{3.51} &\textit{ .47} \\
 & Gisette & 93.96 & 66.31 & 14.99 & 58.83 & 3.22 \\
 &  & \textit{.00} & \textit{3.31} & \textit{.34 }& \textit{3.44} &\textit{ .49} \\
 & Arcene & 98.36 & 88.34 & 12.71 & 45.96 & 5.35 \\
 &  & \textit{.00} & \textit{4.38} & \textit{.27} & \textit{9.76} & \textit{.24} \\
 \hline
\end{tabular}%
\end{footnotesize}
\vskip -0.1in
\end{table}

As far as sparse learning for feature selection goes, the proposed algorithm, as shown in Table~\ref{tbl:sparsity_summary}, is the only method that delivers sparse weight vectors across five example datasets with various sparsity levels. It is shown in Table \ref{tbl:sparsity_summary} that the resulted weight vectors are far more sparse than the truncated gradient algorithm in all datasets with both loss functions. When comparing these two algorithms, the distinction in attained sparsity levels is particularly striking with logistic loss function and when the sparsity level of the input data is low, such as the case of the Arcene dataset. These results also indicate that the truncate gradient algorithm fails to extract significant features and to obtain sparse solutions, which motivates the development of techniques discussed in this paper. On the other hand, the variances in the percentage of nonzero features of the proposed algorithm are reduced by approximately a magnitude of 10. The contribution of stabilization is demonstrated again in terms of feature selection.  Although RDA achieves very low sparsity in Dorothea, such a behavior is not observed in other datasets. It is shown to result in particularly denser weight vector with highly sparse data, such as RCV1, indicating its weakness in identifying truly informative features when information is scarce. FOBOS demonstrates overall poorer performance in terms of inducing sparse weight vector as compared to RDA and the proposed algorithm. Similar to its performance in terms of test error, the proposed algorithm delivers highly stable results in feature selection regardless of the choice of loss function and of random permutations of the data. We also achieve sufficient sparse result owing to stability selection which prevent noisy features from adding back to the stable set of variables in the online setting.  On the other hand, the high sparsity in weight vector does not overshadow the generalizability performance as the informative truncation with unbiased shrinkage underlies a better estimation of the selection probability for the construction of the set of stable variables.

We further compare the data sparsity levels of the features selected by both sparse online learning algorithms. In Figure~\ref{fig: spar_ratio_dorothea}, using the Dorothea dataset as an example, we show the fraction of selected features at different data density levels. It is shown that truncated gradient algorithm is more likely to select dense features over sparse ones with both loss functions. The majority of the sparse features are truncated to zero regardless of their importance. Moreover, the fraction of selected feature exhibits a linear relationship with the level of feature density in truncated gradient algorithm. This pattern suggests that the amount of shrinkage applied to features should be approximately proportional to their data density, on the probability of having informative updates. This provides an independent justification of the informative truncation introduced in Section~\ref{sec_informative_truncation}. In contrast, features selected by the proposed algorithm have approximately uniform distribution of sparsity levels. This improvement over the truncated gradient algorithm mostly owes to the use of informative truncation. With the crucial treatment on the heterogeneity in feature sparsity, the proposed algorithm is effective in keeping rare but important features from premature truncations.

\begin{figure} 
\centering
\includegraphics[width=12cm]{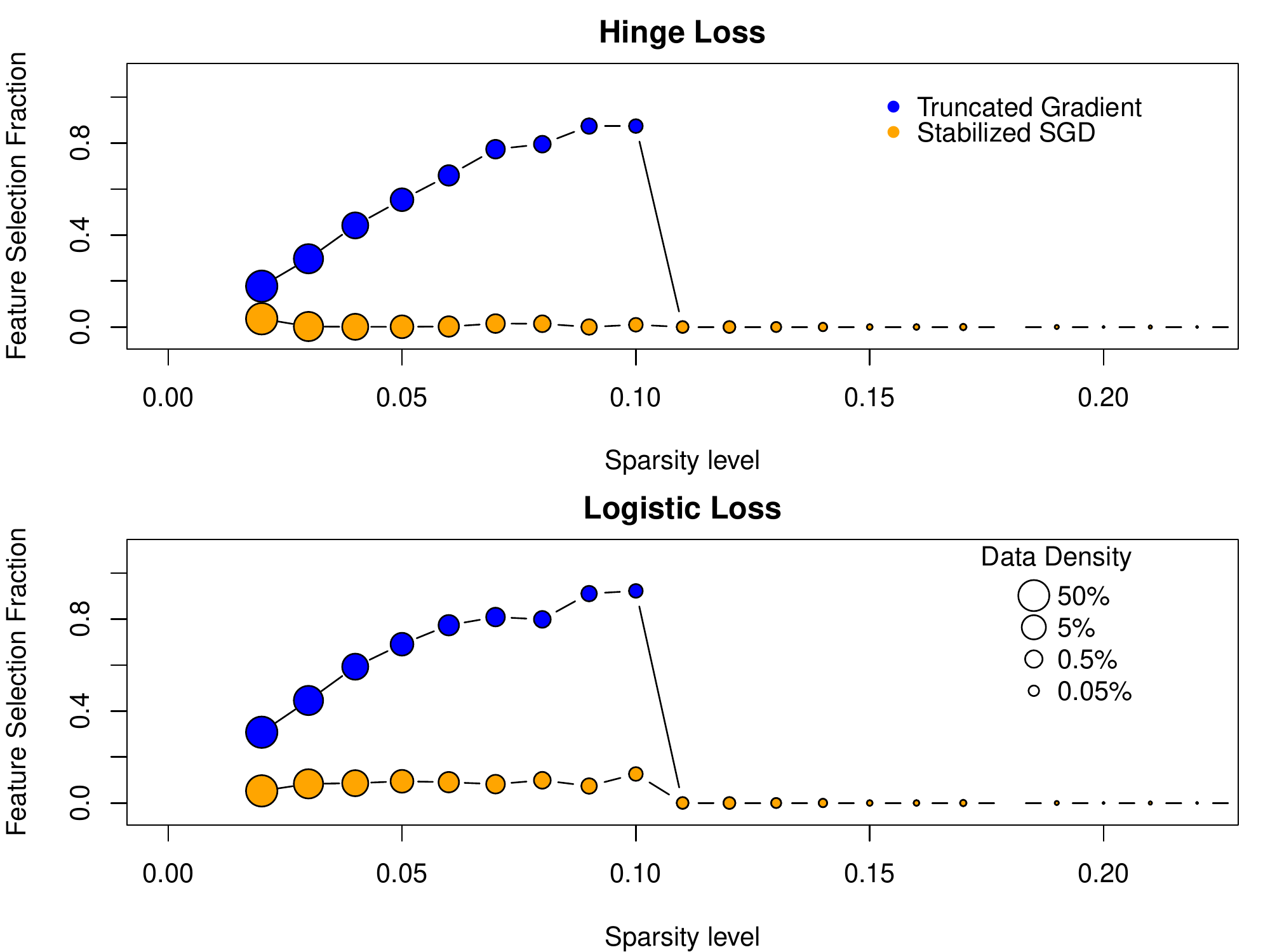}
\caption{The fraction of features selection at different sparsity levels by the truncated gradient algorithm and the proposed algorithm respectively with the Dorothea dataset as an example. The selected features are extracted as the features with nonzero entries in the resulted weight vector from the algorithms with a single fixed ordering of the training data. The $x$-axis represents the sparsity level, which is the proportion of nonzero entries in a feature in the training sample. The $y$-axis indicates the fraction of selected features among features with a discretized sparsity level. The size of each point scales with the proportion of features that fall in level of sparsity. It can be seen that the truncated gradient algorithm over-selects denser features. The proposed algorithm does not have a pre-exposed preference towards any density level.}
\label{fig: spar_ratio_dorothea}
\end{figure}

Based on the measure of feature selection stability defined in \eqref{eq_s_stab}, the performance of the original truncated gradient algorithm \citep{langford2009sparse} and the proposed algorithm are summarized in Table~\ref{tbl:stability_summary}. The proposed algorithm demonstrates its excellent stability as compared to the truncated gradient algorithm. It shows greater overlap in selected features among different permutations of the training data. This result also echos with the lower standard deviations in the test error (Table \ref{tbl:test_error_summary}) and in the number of selected features (Table \ref{tbl:sparsity_summary}). As discussed in Section~\ref{sec_stabilized_truncated_SGD}, the insensitivity to data perturbation is mainly due to the fair shrinkage applied on each feature by using informative truncation and to the sufficient accumulations of information in selection probabilities before applying stability selection. These measures suggest that the proposed algorithm is particularly favorable for high-dimensional sparse data.

\begin{table}[h]
\caption{Summary of feature selection stability performance of the resulted weight vector measured by \eqref{eq_s_stab} based on Cohen's kappa coefficient.}
 \label{tbl:stability_summary}
\vskip 0.15in
\begin{center}
\begin{small}
\begin{tabular}{| c ||  C{1.7cm} | C{2cm} ||   C{1.7cm} | C{2cm} |}
\hline
 & \multicolumn{2}{|c||}{Hinge Loss} & \multicolumn{2}{|c|}{Logistic Loss} \\
 \hline 
Dataset &  \tabbox[b]{Truncated Gradient} &  Stabilized Truncated SGD & \tabbox[b]{Truncated Gradient} &  Stabilized Truncated SGD \\
\hline
RCV1& 0.38 & 0.44 & 0.29  &  0.60 \\
Dexter & 0.46  &  0.61 & 0.23 &  0.58\\
Dorothea & 0.19  &  0.52  & 0.19 &    0.33\\
Gisette & 0.48 &   0.60 & 0.54  &  0.54 \\
Arcene & 0.26  & 0.55 & 0.35  & 0.69  \\
\hline
\end{tabular}
\end{small}
\end{center}
\vskip -0.1in
\end{table}

\section{Conclusion} \label{sec_conclusion}
In this paper, we address the problem of inducing sparsity in subgradient-based online learning algorithm when applied to high-dimensional sparse data. Based on the truncated gradient framework, we reduce truncation bias due to heterogeneity in feature sparsity via informative truncation. Integrated with soft-thresholding truncation, stability selection helps eliminate irrelevant features along the learning process in order to achieve stable and sufficiently sparse results. The adaptive gravity method dynamically adjusts the shrinkage at different stages of the learning process to balance between the exploration of sparse combination of nonzero weights and the exploitation of finite updates for better convergence. This strategy also consolidates the tuning needs of the proposed algorithm into two major parameters, the annealing rate $\gamma$ and the purging threshold $\pi_0$. We present a theoretical analysis of the expected regret bound, which offered a quantification of the potential improvement attained by the proposed algorithm. At last, we use extensive experimental studies to evaluate the performance of the proposed algorithm. The results demonstrate that our algorithm achieve favorable results in terms of both prediction accuracy and feature selection, compared to the original truncated gradient algorithm.

\newpage
\bibliographystyle{abbrvnat}
\bibliography{ssgd_bib.bib}

\begin{appendices}
\section{Proof of Lemma~\ref{lemma1}} \label{sec_proof_lemma1}
\begin{proof}
Note that $\mathbf{w}^*$ can be written as $ \mathbf{w}^* \mathbbm{1}_{\Omega^*}$. Based on the stability selection in \eqref{eq_stability_selection},  we have $\tilde{\mathbf{w}} = \hat{\mathbf{w}} \mathbbm{1}_{\hat{\Omega}}$.  Let $\Omega = \{1, \dots, p\}$. The full set $\Omega$ can be further divided into four disjoint sets: $S_1 = \Omega \setminus (\Omega^* \cup \hat{\Omega})$, $S_2 =  \Omega^* \setminus \hat{\Omega}$, $S_3 =  \hat{\Omega} \setminus \Omega^*$, and $S_4 = \Omega^* \cap \hat{\Omega}$, respectively. 

Firstly, we consider the $L_2$ distance from the non-stabilized weight vector $\hat{\mathbf{w}}$ to $\mathbf{w}^*$ under Assumption~\ref{assump_3} and the $L_2$ distance from the stabilized weight vector $\tilde{\mathbf{w}}$ to $\mathbf{w}^*$.
\begin{align*}
||\hat{\mathbf{w}} - \mathbf{w}^*||^2 & = ||\hat{\mathbf{w}} - \mathbf{w}^*  \mathbbm{1}_{\Omega^*} ||^2 \\
& = \sum_{j \in \Omega^* } (\hat{w}_j - w_j^*)^2 + \sum_{j \notin \Omega^*} \hat{w}_j^2,\\
& = \sum_{j \in S_2 \cup S_4 } (\hat{w}_j - w_j^*)^2 + \sum_{j  \in S_1 \cup S_3} \hat{w}_j^2.
\end{align*}

Since
\begin{equation*}
(\hat{\mathbf{w}} \mathbbm{1}_{\hat{\Omega}} - \mathbf{w}^* \mathbbm{1}_{\Omega^*})^2_j  = \left\{ 
\begin{array}{ll}
0,& \mbox{if } j \notin \Omega^* \mbox{ and } j \notin \hat{\Omega}; \\
(w_j^*)^2,& \mbox{if } j \in \Omega^* \mbox{ and } j \notin \hat{\Omega} ;\\
(\hat{w}_j)^2,& \mbox{if } j \notin \Omega^* \mbox{ and } j \in \hat{\Omega}; \\
(\hat{w}_j - w_j^*)^2,& \mbox{if } j \in \Omega^* \mbox{ and } j \in \hat{\Omega}, \\
\end{array}
\right.
\end{equation*}

\begin{align*}
|| \tilde{\mathbf{w}} - \mathbf{w}^* ||^2 & = ||\hat{\mathbf{w}} \mathbbm{1}_{\hat{\Omega}} - \mathbf{w}^* \mathbbm{1}_{\Omega^*} ||^2 \\
& = \sum\limits_{j \in S_2} (w_j^*)^2 + \sum\limits_{j \in S_3} (\hat{w}_j)^2 + \sum\limits_{j \in S_4}  (\hat{w}_j - w_j^*)^2.
\end{align*}
Then
\vspace{-0.5cm}
\begin{align*}
||\hat{\mathbf{w}} - \mathbf{w}^*||^2 - ||\tilde{\mathbf{w}} - \mathbf{w}^* ||^2 & = \sum\limits_{j \in S_2}  (\hat{w}_j - w_j^*)^2 +  \sum\limits_{j \in S_1} \hat{w}_j^2 -  \sum\limits_{j \in S_2} (w_j^*)^2 \\
& = \sum\limits_{j \in S_2} \hat{w}_j^2 + \sum\limits_{j \in S_2}  (w_j^*)^2 - 2 \sum\limits_{j \in S_2}  w_j^* \hat{w}_j - \sum\limits_{j \in S_2}  (w_j^*)^2 + \sum\limits_{j \in S_1} \hat{w}_j^2 \\ 
& = \sum\limits_{j \in S_1 \cup S_2} \hat{w}_j^2 - 2 \sum\limits_{j \in S_2} w_j^* \hat{w}_j \\
& = || \hat{\mathbf{w}} ( 1 - \mathbbm{1}_{\hat{\Omega}} ) ||^2 - 2 \sum\limits_{j \in S_2}  w_j^* \hat{w}_j \\ \numberthis \label{eq_eq_00}
\end{align*}

For the first part in \eqref{eq_eq_00}, there exists a small $\varepsilon \in \left( 0, \frac{\pi_0 C(p - |\hat{\Omega}|)}{|\hat{\Omega}|} \right)$ with $\hat{S}_{\varepsilon} = \{j: \mathbb{E} (|\hat{w}_j |) > \varepsilon \}$ such that 
\vspace{-0.5cm}
\begin{align*}
\mathbb{E} \left(|| \hat{\mathbf{w}} (1-  \mathbbm{1}_{\hat{\Omega}} ) ||^2 \right)  = & \mathbb{E} \left(\sum\limits_{j=1}^{p} \hat{w}_j^2 \mathbbm{1}(\hat{p}_j \leq \pi_0) \right)\\
= &\mathbb{E} \left( \sum\limits_{j=1}^{p} \hat{w}_j^2  \mathbbm{1}(\hat{p}_j \leq \pi_0) \mathbbm{1}( |\hat{w}_j|  \leq \varepsilon ) \right)+\mathbb{E} \left( \sum\limits_{j=1}^{p} \hat{w}_j^2  \mathbbm{1}(\hat{p}_j \leq \pi_0) \mathbbm{1}( |\hat{w}_j|  > \varepsilon ) \right)\\
\geq & 0 + \varepsilon^2 \mathbb{E} \left( \sum\limits_{j=1}^{p}  \mathbbm{1}(\hat{p}_j \leq \pi_0) \mathbbm{1}( |\hat{w}_j|  \leq \varepsilon ) \right) \\
\geq & \varepsilon^2 ( |\hat{S}_{\varepsilon}| - | \hat{\Omega} | ) >  0, \numberthis \label{eq_lemma1_eq0b}
\end{align*}
where the last inequality is due to $|\hat{S}_{\varepsilon}| > | \hat{\Omega} |$ with the specified range of $\varepsilon$.

For the second part in \eqref{eq_eq_00}, from on Theorem 1 of \cite{meinshausen2010stability} in stability selection, we have
$$
\mathbb{E} ( | S_2| )  \leq  \left( \frac{q^g}{2\pi_0 p - p} - 1 \right) q^g + d^*.
$$
And, since, for any $j \in \Omega \setminus \hat{\Omega}$, $\Pr(|\hat{w}_j| > 0) \leq \pi_0$, 
$$
\mathbb{E}( |\hat{w}_j|) =  \int_{0}^{C} \Pr(|\hat{w}_j| \geq u) du \leq  \Pr(|\hat{w}_j| > 0) C  \leq \pi_0 C.
$$
Then,
\vspace{-0.5cm}
\begin{align*}
\mathbb{E} \left(  \sum\limits_{j \in S_2} w_j^* \hat{w}_j \right) \leq \pi_0 C^2 \mathbb{E} (|S_2|) \leq \pi_0 C^2  \mathbb{E} (|S_2|)  \left[\left( \frac{q^g}{2\pi_0 p - p} - 1 \right) q^g + d^* \right].
\end{align*}
Thus we have 
\vspace{-0.3cm}
\begin{align} \label{eq_lemma1_eq0c}
\mathbb{E} \left( ||\hat{\mathbf{w}} - \mathbf{w}^*||^2 - ||\tilde{\mathbf{w}} - \mathbf{w}^* ||^2  \right) \geq \varepsilon^2 ( |\hat{S}_{\varepsilon}| - |\hat{\Omega} | ) + 2 \pi_0 C^2 \left[  \left( 1 - \frac{q^g}{2\pi_0 p - p} \right) q^g - d^* \right] . 
\end{align}

The first half of \eqref{eq_lemma1_eq0c} is proved to be non-negative by the inequality in \eqref{eq_lemma1_eq0b}. With a sufficiently large $\pi_0 \in \left( \frac{1}{2} + \frac{(q^{\mathbf{g}})^2}{2p(q^{\mathbf{g}} - d^*)} , 1\right)$, \eqref{eq_lemma1_eq0c}  is non-negative by taking $\pi_0$ into the second half of it with $C \geq 0$ under Assumption 1.

\end{proof}

\section{Proof of Lemma 2} \label{sec_proof_lemma2}

\begin{proof}
Based on Algorithm~\ref{alg:ssgd_1} and Algorithm~\ref{alg:ssgd_2}, we have the following relationships:
\begin{align*} 
\bar{\mathbf{w}}_1 =  & T(\mathbf{h}_1, Kg_0) =  \mbox{sign}(\mathbf{h}_1) \max ( |\mathbf{h}_1| -g_{0} \mathbf{1}_p K, 0), \numberthis \label{eq_lemma2_eq1a}\\
\hat{\mathbf{w}}_1 = &  T(\mathbf{h}_1, \tilde{\mathbf{k }}g_0) = \mbox{sign}(\mathbf{h}_1) \max ( |\mathbf{h}_t| -g_{0} \tilde{\mathbf{k}}_{1}, 0),\numberthis \label{eq_lemma2_eq1b}
\end{align*}
where $$\mathbf{h}_1 = \mathbf{w}_0 - \sum\limits_{t=1}^{K}  \eta L' \left( \mathbf{w}_{t-1}, z_t \right). $$
Without loss of generality, we consider the difference in the squared errors to the optimal weight vector $\mathbf{w}^*$ between$\bar{\mathbf{w}}_1$ and $\hat{\mathbf{w}}_1$:
\begin{align*}
& ||\bar{\mathbf{w}}_1 - \mathbf{w}^*||^2 -   ||\hat{\mathbf{w}}_1 - \mathbf{w}^* ||^2 \\
= & || (\bar{\mathbf{w}}_1 - \mathbf{h}_1) - (\mathbf{w}^* -\mathbf{h}_1) ||^2 -  || (\hat{\mathbf{w}}_1 - \mathbf{h}_1) - (\mathbf{w}^* -\mathbf{h}_1) ||^2 \\
= & \left(  || \bar{\mathbf{w}}_1 - \mathbf{h}_1||^2 -  || \hat{\mathbf{w}}_1 - \mathbf{h}_1||^2 \right)  - 2\left[  ( \bar{\mathbf{w}}_1 - \mathbf{h}_1)^T(\mathbf{w}^* - \mathbf{h}_1) - ( \hat{\mathbf{w}}_1 - \mathbf{h}_1)^T(\mathbf{w}^* - \mathbf{h}_1)  \right] \numberthis  \label{eq_lemma2_eq2}
\end{align*}
In the first part of \eqref{eq_lemma2_eq2}, based on \eqref{eq_lemma2_eq1a} and \eqref{eq_lemma2_eq1b},  we have
\begin{align*}
|| \bar{\mathbf{w}}_1 - \mathbf{h}_1||^2 & = pK^2 g_0^2, \\
|| \hat{\mathbf{w}}_1 - \mathbf{h}_1||^2 & =  ||\tilde{\mathbf{k}}_1||^2 g_0^2, \\
\end{align*} 
where $\tilde{\mathbf{k}}_1$ is the counter of informative updates in the first burst with $\tilde{k}_{1,j} \leq K$ for $j = 1, \dots, p$.
\noindent Hence, 
 \begin{align} \label{eq_lemma2_eq2a}
 || \bar{\mathbf{w}}_1 - \mathbf{h}_1||^2 -  || \hat{\mathbf{w}}_1 - \mathbf{h}_1||^2  \geq 0 
 \end{align}
 Based on \eqref{eq_lemma2_eq1a} and \eqref{eq_lemma2_eq1b},
 \begin{align*}
  (\bar{\mathbf{w}}_1 - \mathbf{h}_t)^T \mathbf{h}_1  = & -g_{0} \sum\limits_{j=1}^{p} \mbox{sign}(h_{1,j}) K \mathbf{h}_{1,j}  = -g_{0} K|| \mathbf{h}_t ||_1, \\
 (\hat{\mathbf{w}}_1 - \mathbf{h}_t)^T\mathbf{h}_1  = & -g_{0} \sum\limits_{j=1}^{p} \mbox{sign}(h_{1,j}) \tilde{k}_{1,j} \mathbf{h}_{1,j}  = -g_{0} || \tilde{\mathbf{k}}_{1} \mathbf{h}_t ||_1.\\
 \end{align*}
 Thus in the second part of \eqref{eq_lemma2_eq2}, we have
 \begin{align*}
& - 2\left[  ( \bar{\mathbf{w}}_1 - \mathbf{h}_1)^T(\mathbf{w}^* - \mathbf{h}_1) - ( \hat{\mathbf{w}}_1 - \mathbf{h}_1)^T(\mathbf{w}^* - \mathbf{h}_1)  \right] \\
= & \left[  (\bar{\mathbf{w}}_1 - \mathbf{h}_1)^T \mathbf{h}_1 -  (\hat{\mathbf{w}}_1 - \mathbf{h}_1)^T \mathbf{h}_1 \right] - \left[ (\bar{\mathbf{w}}_1 - \mathbf{h}_1)^T\mathbf{w}^*- (\hat{\mathbf{w}}_1 - \mathbf{h}_1)^T\mathbf{w}^*  \right] \\
= &  2g_0 \left[ \left( K \norm{ \mathbf{h}_1}_1 - \norm{\tilde{\mathbf{k}}_1 \mathbf{h}_1}_1 \right) - (\bar{\mathbf{w}}_1 -\hat{\mathbf{w}}_1)^T  \mathbf{w}^* \right] \numberthis  \label{eq_lemma2_eq3}
 \end{align*}
Since $K \geq \tilde{k}_{1,j}$ for $j = 1, \dots, p$, the first part of \eqref{eq_lemma2_eq3} is guaranteed to be nonnegative. In the second part of  \eqref{eq_lemma2_eq3} , let $\hat{w}_{1,j} = \hat{v}_{1,j} \mathbbm{1}( \abs{h_{1,j}} > \tilde{k}_{1,j} g_0)$ and  $\bar{w}_{1,j} = \bar{v}_{1,j} \mathbbm{1}( \abs{h_{1,j}} >K g_0)$. Denote $\delta_t = L(w_{t-1}, z_t)$ to be the gradient of the loss function at a certain iteration. Since in this paper we consider linear prediction model, 
$$
\delta_t = L'(\mathbf{w}_{t-1}, z_t) = G(f_{\mathbf{w_{t-1}}}(\mathbf{x}_t))\mathbf{x}_t.
$$
Hence, $\delta_{t,j} = \delta_{t,j} \mathbbm{1}(|x_{t,j}| >0)$.

Again based on relationships in \eqref{eq_lemma2_eq1a} and \eqref{eq_lemma2_eq1b}, we consider the following two scenarios: 
\begin{itemize}
\item When $h_{1,j} > 0$:
\begin{align*}
\hat{v}_{1,j} = & \sum\limits_{t=1}^{K} \left[ \mathbbm{1}(|x_{t,j}| >0)( \delta_{t,j} - g_0) \right], \\
\bar{v}_{1,j} = & \sum\limits_{t=1}^{K} \left[  \mathbbm{1}(|x_{t,j}| >0)\delta_{t,j} - g_0 \right]. 
 \end{align*}
 Thus, 
\begin{align*}
 \bar{v}_{1,j} - \hat{v}_{1,j} = & - \sum\limits_{t=1}^{K} \mathbbm{1}(|x_{t,j}| =0) g_0, \\
 \bar{w}_{1,j} - \hat{w}_{1,j} = & - \sum\limits_{t=1}^{K} \mathbbm{1}(|x_{t,j}| =0) g_0 \mathbbm{1}(h_{1,j} > Kg_0) \\
 &  -  \sum\limits_{t=1}^{K} (h_{1,j} - \tilde{k}_{1,j} g_0) \mathbbm{1} (\tilde{k}_{1,j} g_0 < h_{1,j} < Kg_0)  \\
 \leq &  - \sum\limits_{t=1}^{K} \mathbbm{1}(|x_{t,j}| =0) g_0 \mathbbm{1}(h_{1,j} > Kg_0).\numberthis  \label{eq_lemma2_eq4}
\end{align*} 
 \item When $h_{1,j} < 0$, similarly, we get 
\begin{align*}
 \bar{v}_{1,j} - \hat{v}_{1,j} = &  \sum\limits_{t=1}^{K} \mathbbm{1}(|x_{t,j}| =0) g_0,\\
 \bar{w}_{1,j} - \hat{w}_{1,j} = &  \sum\limits_{t=1}^{K} \mathbbm{1}(|x_{t,j}| =0) g_0 \mathbbm{1}(h_{1,j} < -Kg_0)  \\
 & +  \sum\limits_{t=1}^{K} (h_{1,j} + \tilde{k}_{1,j} g_0) \mathbbm{1} (- Kg_0< h_{1,j} < -\tilde{k}_{1,j} g_0)  \\
 \geq &  \sum\limits_{t=1}^{K} \mathbbm{1}(|x_{t,j}| =0) g_0 \mathbbm{1}(h_{1,j} < -Kg_0). \numberthis  \label{eq_lemma2_eq5}
\end{align*} 
\end{itemize}
 Hence, based on  jointly we have 
\begin{align*}
& \mathbb{E} \left( ( \bar{\mathbf{w}}_1 -\hat{\mathbf{w}}_1)^T  \mathbf{w}^* \right)\\
= &  \sum\limits_{j=1}^{p} \mathbb{E}( (\bar{w}_{1,j} - \hat{w}_{1,j}) w_j^*| h_{1,j} > 0) \Pr(h_{1,j} > 0) + \mathbb{E}( (\bar{w}_{1,j} - \hat{w}_{1,j}) w_j^*| h_{1,j} < 0) \Pr(h_{1,j} < 0) \\
\leq &  -g_0 \sum\limits_{t=1}^{K}\sum\limits_{j=1}^{p} \mathbbm{1}(|x_{t,j}| = 0)|w^*_j| \leq 0
\end{align*}
The second inequality is derived from the condition that $\lambda_j = 0$ if $\mathbf{w}^*_j = 0$ and the following fact. Since $\mathbf{h}_1$ is the sum of $K$ stochastic gradients descent steps and each one of the stochastic gradient which is an unbiased estimator of the true gradient \cite{bottou1998online}, we have 
\begin{align*}
\mathbb{E}( \mathbbm{1}(h_{1,j} < 0) w_j^*) & < 0, \\
\mathbb{E}( \mathbbm{1}(h_{1,j} > 0) w_j^*) & > 0. 
\end{align*} 
Hence, together with \eqref{eq_lemma2_eq2a} and \eqref{eq_lemma2_eq3}, we have 
$$
 ||\bar{\mathbf{w}}_1 - \mathbf{w}^*||^2 -   ||\hat{\mathbf{w}}_1 - \mathbf{w}^* ||^2 \geq 0.
$$
\end{proof}

\vspace{-1cm}
\section{Proof of Theorem 1} \label{sec_proof_thm1}
\begin{proof}
At a given time $t$ when truncation is performed, let $\mathbf{h}_t =  \mathbf{w}_{t-1} - \eta \nabla_{\mathbf{w}_{t-1}} L(\mathbf{w}_{t-1}, z)$ and let $\hat{\mathbf{w}}_t$ be the truncated but non-stabilized weight vector based on the truncation in \eqref{eq_weight_trunc} that $\hat{\mathbf{w}}_t = T(\mathbf{h}_t, g_{0,t} \tilde{\mathbf{k}}_t)$ with the base gravity $g_{0,t}$. Let $\hat{\Omega}$ be the current set of stable features and the stabilized weight vector is obtained as $\tilde{\mathbf{w}}_t = \hat{\mathbf{w}}_t \mathbbm{1}_{\hat{\Omega}}$. 

Firstly, we have 
\begin{align*}
||\hat{\mathbf{w}}_t - \mathbf{w}^* ||^2 & \leq || \mathbf{w}^*  - \hat{\mathbf{w}}_t ||^2 + ||\hat{\mathbf{w}}_t - \mathbf{h}_t ||^2  \numberthis  \label{eq_eq_1} \\
& = || \mathbf{w}^* - \mathbf{h}_t ||^2 - 2(\mathbf{w}^* - \hat{\mathbf{w}}_t)^T (\hat{\mathbf{w}}_t - \mathbf{h}_t).
\end{align*}
In the first part of \eqref{eq_eq_1}, 
\begin{align*}
||\mathbf{w}^* - \mathbf{h}_t ||^2  = & || \mathbf{w}^* - \mathbf{w}_{t-1} + \mathbf{w}_{t-1} - \mathbf{h}_t ||^2 \\
 = & ||\mathbf{w}^* - \mathbf{w}_{t-1}||^2 + ||\mathbf{w}_{t-1} - \mathbf{h}_t||^2 + 2 (\mathbf{w}^* - \mathbf{w}_{t-1})^T (\mathbf{w}_{t-1} - \mathbf{h}_t) \\
= & ||\mathbf{w}^* - \mathbf{w}_{t-1}||^2 + ||\mathbf{w}_{t-1} - \eta \nabla_{\mathbf{w}_{t-1}} L(\mathbf{w}_{t-1}, z) - \mathbf{w}_{t-1} ||^2 \\
& + 2(\mathbf{w}^* - \mathbf{w}_{t-1})^T ( \mathbf{w}_{t-1} - \mathbf{w}_{t-1} + \eta \nabla_{\mathbf{w}_{t-1}} L(\mathbf{w}_{t-1}, z)) \\
= & ||\mathbf{w}^* - \mathbf{w}_{t-1}||^2 + \eta^2 || \nabla_{\mathbf{w}_{t-1}} L(\mathbf{w}_{t-1}, z)||^2 + 2 \eta (\mathbf{w}^* - \mathbf{w}_{t-1})^T\nabla_{\mathbf{w}_{t-1}} L(\mathbf{w}_{t-1}, z). \numberthis \label{eq_eq_2}
\end{align*}
Since $L(w, z)$ is convex, 
$$
(\mathbf{w}^* - \mathbf{w}_{t-1})^T \nabla_{\mathbf{w}_{t-1}} L(\mathbf{w}_{t-1}, z) \leq L(\mathbf{w}^* , z) - L(\mathbf{w}_{t-1}, z).
$$
And based on Assumption 1,
$$
||\nabla_{\mathbf{w}_{t-1}} L(\mathbf{w}_{t-1}, z)||^2 \leq A L(\mathbf{w}_{t-1}, z) + B.
$$
Thus \eqref{eq_eq_2} has the upper bound
$$
||\mathbf{w}^* - \mathbf{h}_t ||^2  \leq ||\mathbf{w}^* - \mathbf{w}_{t-1}||^2 + \eta^2 (AL(\mathbf{w}_{t-1}, z) + B) + 2\eta \left[  L(\mathbf{w}^* , z) - L(\mathbf{w}_{t-1}, z) \right].
$$

In the second part of \eqref{eq_eq_1}, since $\hat{\mathbf{w}}_t = T(\mathbf{h}_t, g_t)$, $\hat{\mathbf{w}}_t = \mbox{sign}(\mathbf{h}_t) \max ( |\mathbf{h}_t| -g_{0,t} \tilde{\mathbf{k}}_{t}, 0)$ and thus 
$$
 (\hat{\mathbf{w}}_t - \mathbf{h}_t)^T \hat{\mathbf{w}}_t  = -g_{0,t} \sum\limits_{j=1}^{p} \mbox{sign}(h_{t,j}) \tilde{k}_{t,j} \hat{\mathbf{w}}_{t,j}  = -g_{0,t} || \tilde{\mathbf{k}}_{t} \hat{\mathbf{w}}_t ||_1.
$$
Then
\vspace{-0.5cm}
\begin{align*}
- (\mathbf{w}^* - \hat{\mathbf{w}}_t ) ^T (\hat{\mathbf{w}}_t - \mathbf{h}_t) & = - (\mathbf{w}^*)^T (\hat{\mathbf{w}}_t - \mathbf{h}_t) + \hat{\mathbf{w}}_t^T (\hat{\mathbf{w}}_t - \mathbf{h}_t) \\
& \leq - \sum\limits_{j=1}^{p} |w_j^*| | \hat{w}_{t, j} - h_{t,j} |  - g_{0,t} ||\tilde{\mathbf{k}}_{t}\hat{\mathbf{w}}_t ||_1 \\
& \leq -g_{0,t} \sum\limits_{j=1}^{p} | \tilde{k}_{t,j} w_j^* | - g_{0,t}  || \tilde{\mathbf{k}}_{t} \hat{\mathbf{w}}_t ||_1 \\
& \leq -  K g_{0,t}  \sum\limits_{j=1}^{p} |w_j^* | - K g_{0,t}  ||\hat{\mathbf{w}}_t ||_1 \\
 & = -K g_{0,t} \left( || \mathbf{w}^* ||_1 - || \hat{\mathbf{w}}_t ||_1 \right).
\end{align*}
The second last inequality is due to $\tilde{k}_{j,t} \leq K$ for all $j = 1, \dots, p$. 

Thus, in total, equation \eqref{eq_eq_1} has the upper bound
\begin{align} \label{eq_eq_12}
||\hat{\mathbf{w}}_t - \mathbf{w}^* ||^2 \leq &  ||\mathbf{w}^* - \mathbf{w}_{t-1}||^2 + \eta^2 (AL(\mathbf{w}_{t-1}, z) + B) \\
& + 2\eta \left[  L(\mathbf{w}^* , z) - L(\mathbf{w}_{t-1}, z) \right] + 2K g_{0,t} \left( || \mathbf{w}^* ||_1 - || \hat{\mathbf{w}}_t ||_1 \right).
\end{align} 

On the other hand, from Lemma~\ref{lemma1} we have 
\begin{align*}
&\mathbb{E} \left( ||\hat{\mathbf{w}}_t - \mathbf{w}^*||^2 - ||\tilde{\mathbf{w}}_t - \mathbf{w}^* ||^2  \right)  \geq \varepsilon^2 ( |\hat{S}_{\varepsilon}| - |\hat{\Omega} | ) + 2 \pi_0 C^2  \left[  \left( 1 - \frac{ q^{\mathbf{g_t}}}{2\pi_0 p - p} \right) q^{\mathbf{g}_t} - d^* \right] \numberthis \label{eq_eq_4}, 
\end{align*}
where $\mathbf{g}_t = g_{0,t} \tilde{\mathbf{k}}_t $.

Since $\pi_0 \in \left( \frac{1}{2} + \frac{(q^{\mathbf{g}})^2}{2p(q^{\mathbf{g}} - d^*)} , 1\right)$ and, by Assumption 3, $q^{\mathbf{g}} > d^*$ for $t=1, \dots, T$, we have 
$$
 2 \pi_0 C^2  \left[  \left( 1 - \frac{ q^{\mathbf{g}_t}}{2\pi_0 p - p} \right) q^{\mathbf{g}_t} - d^* \right] \geq 0,  \quad \mbox{for } t = 1, \dots, T.
$$
Thus, we can further write \eqref{eq_eq_4} as

\begin{equation} \label{eq_eq_44}
\mathbb{E}  \left(||\tilde{\mathbf{w}}_t - \mathbf{w}^* ||^2  \right)  \leq \mathbb{E} \left( ||\hat{\mathbf{w}}_t - \mathbf{w}^*||^2 \right) - \varepsilon^2 ( |\hat{S}_{\varepsilon}| - |\hat{\Omega} | ).
\end{equation}
By rearranging \eqref{eq_eq_44} and combining it with \eqref{eq_eq_12}, we get
\begin{align*}
& \mathbb{E} \left( \left( 1 - \frac{1}{2} \eta A \right)  L (\mathbf{w}_{t-1}, z_t)  + \frac{K g_{0,t}}{\eta} ||\mathbf{w}_{t-1}||_1 \right) \\
 \leq &  \mathbb{E} \left( L(\mathbf{w}^*, z_t) + \frac{K g_{0,t}}{\eta} ||\mathbf{w}^*||_1 \right) + \frac{1}{2\eta} \mathbb{E} \left( ||\mathbf{w}^* - \mathbf{w}_{t-1}||^2 - ||\mathbf{w}^* - \tilde{\mathbf{w}}_t||^2 \right) + \frac{\eta B}{2} \\
 & - \frac{\varepsilon^2}{2\eta} ( | \hat{S}_{\varepsilon_t} | -| \hat{\Omega} |). \numberthis  \label{eq_eq_5}
\end{align*}

Based on Algorithm 3, let $\underline{\mathbf{w}}_t$ be the output weight vector at the end of time $t$ and let $\underline{g}_t$ be the applied gravity parameter at time $t$, where
\begin{align*}
\underline{g}_{0,t} & = \left\{ 
\begin{array}{ll}
\frac{g_{0,t}}{\eta}, & \mbox{if } t/K \mbox{ is an integer},\\
0, & \mbox{otherwise}.\\
\end{array}
\right. , \\
\underline{g}_{t} & = \left\{ 
\begin{array}{ll}
\tilde{\mathbf{k}} \frac{g_{0,t}}{\eta}, & \mbox{if } t/K \mbox{ is an integer},\\
0, & \mbox{otherwise}.\\
\end{array}
\right.,
\end{align*}
and,
$$
\underline{\mathbf{w}}_t = \left\{ 
\begin{array}{ll}
\tilde{\mathbf{w}}_t, & \mbox{if } t/(K n_K) \mbox{ is an integer},\\
\hat{\mathbf{w}}_t, & \mbox{otherwise}.\\
\end{array}
\right..
$$

By initializing the weight vector as a vector of zeros of length $p$, we sum up \eqref{eq_eq_5} over $t = 1, \dots, T$ with telescoping to obtain the following result:
\begin{align*}
& \left(1- \frac{1}{2} \eta A \right) \mathbb{E} \left( \sum\limits_{t=1}^{T} \left[  L(\underline{\mathbf{w}}_t, z_t) + \frac{K \underline{g}_{0,t} }{1-1/2 \eta A}  ||\underline{\mathbf{w}}_t||_1 \right] \right) \\
 \leq & \mathbb{E} \left( \sum\limits_{t=1}^{T} L(\mathbf{w}^*, z_t) + K \underline{g}_{0,t}  ||\mathbf{w}^*||_1 \right) + \frac{1}{2\eta} \mathbb{E} \left( || \mathbf{w}^* - \underline{\mathbf{w}}_1||^2 - || \mathbf{w}^* - \underline{\mathbf{w}}_T||^2 \right) + \frac{\eta BT}{2} \\
& - \frac{1}{2\eta} \sum\limits_{t=1}^{T} \varepsilon_t^2 \mathbbm{1} (\frac{t}{K n_K} \in \mathbb{Z} )  ( |\hat{S}_{\varepsilon, t}| - |\hat{\Omega}_t|) \\
\leq & \mathbb{E} \left( \sum\limits_{t=1}^{T} L(\mathbf{w}^*, z_t) + K \underline{g}_{0,t} ||\mathbf{w}^*||_1 \right)  +  \frac{|| \mathbf{w}^* ||^2 }{2\eta} + \frac{\eta BT}{2} \numberthis  \label{eq_eq_6} \\
& - \frac{1}{2\eta} \sum\limits_{t=1}^{T} \varepsilon_t^2 \mathbbm{1} (\frac{t}{K n_K} \in \mathbb{Z} )  ( |\hat{S}_{\varepsilon, t}| - |\hat{\Omega}_t|). 
\end{align*}

The theorem follows by rearrange terms in \eqref{eq_eq_6} and that $\tilde{k}_{t,j} \leq K$ for $j=1, \dots, p$ and $t=1, \dots, T$. 

\end{proof}

\end{appendices}

\end{document}